\newtheoremstyle{break}
  {\topsep}{\topsep}%
  {\itshape}{}%
  {\bfseries}{}%
  {\newline}{}%
\newtheorem{problem}{Problem}
\newtheorem{theorem}{Theorem}
\newtheorem{proposition}{Proposition}
\newtheorem{remark}{Remark}
\theoremstyle{break}
\newtheorem{claim}{Claim}
\newtheorem{assumptionB}{Assumption}
\newtheorem{assumptionN}{Assumption}
  \par\vspace{\baselineskip}\noindent
  \par\vspace{\baselineskip}
\newcommand{\bh}[1]{[{\textbf{\textcolor{blue}{Bruce: #1}}}]}
\definecolor{note}{rgb}{0.3,0.7,0.25}
\definecolor{rephase}{rgb}{0.15,0.7,0.15}
\definecolor{bag}{rgb}{0.5,0.5,0.0}
\newcommand{\add}[1]{\textbf{\textcolor{OliveGreen}{#1}}}
\newcommand{\addd}[1]{{\textcolor{bag}{#1}}}
\newcommand{\lidar}{LiDAR~}
\newcommand{\lidarNoSpace}{LiDAR}
\newcommand{\lidarN}{LiDAR}
\newcommand{\lidars}{LiDARs~}
\newcommand{\lidarsN}{LiDARs}
\newcommand{\velodyne}{\textit{32-Beam Velodyne ULTRA Puck LiDAR~}}
\newcommand{\velodyneN}{\textit{32-Beam Velodyne ULTRA Puck LiDAR}}
\newcommand{\sslidar}{solid-state LiDAR~}
\newcommand{\sslidarN}{solid-state LiDAR}
\newcommand{\sslidars}{solid-state LiDARs~}
\newcommand{\sslidarsN}{solid-state LiDARs}
\newcommand{\slidar}{spinning LiDAR~}
\newcommand{\slidarN}{spinning LiDAR}
\newcommand{\slidars}{spinning LiDARs~}
\newcommand{\lidart}{LiDARTag~}
\newcommand{\lind}{linearly independent}
\newcommand{\li}{linearly independent}
\newcommand{\LanDual}{Lagrangian Duality~}
\newcommand{\mm}{Min-Min~}
\newcommand{\pc}{\mathcal{PC}}
\newcommand{\BL}{\textbf{BL}}
\newcommand{\squeezeup}{\vspace{-4mm}} 
\newcommand{\comment}[1]{}
\def\realnumbers{\mathbb{R}}
\def\real{\mathbb{R}}
\def\reals{\mathbb{R}}
\newcommand{\dtheta}{\delta_\theta}
\newcommand{\dphi}{\delta_\phi}
\newcommand{\drho}{\delta_\rho}
\DeclareDocumentCommand{\RI}{ O{} O{} }{\mathcal{RI}_{#1}^{#2}}
\newcommand\inv[1]{#1\raisebox{1.15ex}{$\scriptscriptstyle-\!1$}}
\DeclareDocumentCommand{\td}{ O{} }{\tilde{#1}}
\newcommand{\transpose}{\mathsf{T}}
\newcommand{\SO}{\mathrm{SO}}
\newcommand{\SE}{\mathrm{SE}}
\newcommand{\Sim}{\mathrm{Sim}}
\newcommand{\vect}[1]{\mathrm{vec}(#1)}
\DeclareDocumentCommand{\asin}{ O{} }{\sin^{-1}(#1)}
\DeclareDocumentCommand{\acos}{ O{} }{\cos^{-1}(#1)}
\DeclareDocumentCommand{\atan}{ O{} }{\tan^{-1}(#1)}
\DeclareDocumentCommand{\vector}{ O{} }{\mathrm{vec}(#1)}
\DeclareDocumentCommand{\zeros}{ O{} }{\textbf{0}_{#1}}
\DeclareDocumentCommand{\pre}{ O{} O{} }{{}_{#1}^{#2}}
\DeclareMathOperator*{\argmin}{arg\,min}
\newcommand{\Scal}{\mathcal{S}}
\newcommand{\Xcal}{\mathcal{X}}
\DeclareDocumentCommand{\A}{ O{} O{} }{\textbf{A}_{#1}^{#2}}
\DeclareDocumentCommand{\G}{ O{} O{} }{\textbf{G}_{#1}^{#2}}
\DeclareDocumentCommand{\H}{ O{} O{} }{\textbf{H}_{#1}^{#2}}
\DeclareDocumentCommand{\I}{ O{} O{} }{\textbf{I}_{#1}^{#2}}
\DeclareDocumentCommand{\L}{ O{} O{} }{\textbf{L}_{#1}^{#2}}
\DeclareDocumentCommand{\M}{ O{} O{} }{\textbf{M}_{#1}^{#2}}
\DeclareDocumentCommand{\N}{ O{} O{} }{\textbf{N}_{#1}^{#2}}
\DeclareDocumentCommand{\O}{ O{} O{} }{\textbf{O}_{#1}^{#2}}
\DeclareDocumentCommand{\P}{ O{} O{} }{\textbf{P}_{#1}^{#2}}
\DeclareDocumentCommand{\Q}{ O{} O{} }{\textbf{Q}_{#1}^{#2}}
\DeclareDocumentCommand{\R}{ O{} O{} }{\textbf{R}_{#1}^{#2}}
\DeclareDocumentCommand{\T}{ O{} O{} }{\textbf{T}_{#1}^{#2}}
\DeclareDocumentCommand{\U}{ O{} O{} }{\textbf{U}_{#1}^{#2}}
\DeclareDocumentCommand{\V}{ O{} O{} }{\textbf{V}_{#1}^{#2}}
\DeclareDocumentCommand{\X}{ O{} O{} }{\textbf{X}_{#1}^{#2}}
\DeclareDocumentCommand{\Y}{ O{} O{} }{\textbf{Y}_{#1}^{#2}}
\DeclareDocumentCommand{\Z}{ O{} O{} }{\textbf{Z}_{#1}^{#2}}
\DeclareDocumentCommand{\e}{ O{} O{} }{\textbf{e}_{#1}^{#2}}
\DeclareDocumentCommand{\n}{ O{} O{} }{\textbf{n}_{#1}^{#2}}
\DeclareDocumentCommand{\o}{ O{} O{} }{\textbf{o}_{#1}^{#2}}
\DeclareDocumentCommand{\t}{ O{} O{} }{\textbf{t}_{#1}^{#2}}
\DeclareDocumentCommand{\p}{ O{} O{} }{\textbf{p}_{#1}^{#2}}
\DeclareDocumentCommand{\q}{ O{} O{} }{\textbf{q}_{#1}^{#2}}
\DeclareDocumentCommand{\r}{ O{} O{} }{\textbf{r}_{#1}^{#2}}
\DeclareDocumentCommand{\u}{ O{} O{} }{\textbf{u}_{#1}^{#2}}
\DeclareDocumentCommand{\v}{ O{} O{} }{\textbf{v}_{#1}^{#2}}
\DeclareDocumentCommand{\x}{ O{} O{} }{\textbf{x}_{#1}^{#2}}
\begin{document}

\title{Global Unifying Intrinsic Calibration for~Spinning~and~Solid-State~LiDARs}
\author{Jiunn-Kai Huang, Chenxi Feng, Madhav Achar, Maani Ghaffari, and Jessy W. Grizzle
\thanks{The authors are with the Robotics Institute, University of Michigan, Ann Arbor, MI 48109, USA. \texttt{\{bjhuang, chenxif, achar, maanigj, grizzle\}@umich.edu}.} }


\maketitle
\thispagestyle{empty}
\pagestyle{plain}

\begin{abstract}
Sensor calibration, which can be intrinsic or extrinsic, is an essential step to
achieve the measurement accuracy required for modern perception and navigation
systems deployed on autonomous robots. Intrinsic calibration models for
spinning LiDARs have been based on hypothesized physical mechanisms,
resulting in anywhere from three to ten parameters to be estimated from data, while
no phenomenological models have yet been proposed for solid-state LiDARs. Instead of
going down that road, we propose to abstract away from the physics of a LiDAR type (spinning vs. solid-state, for example) and focus on the point cloud's spatial geometry generated by the sensor. By modeling the calibration parameters as an
element of a matrix Lie Group, we achieve a unifying view of calibration for
different types of LiDARs. We further prove mathematically that the proposed model is
well-constrained (has a unique answer) given four appropriately orientated
targets. The proof provides a guideline for target positioning in the form of a
tetrahedron. Moreover, an existing semi-definite programming global solver for SE(3) can be modified to efficiently compute the optimal calibration parameters. For solid-state LiDARs, we illustrate how the method works in simulation. For
spinning LiDARs, we show with experimental data that the proposed matrix Lie Group
model performs equally well as physics-based models in terms of reducing the point-to-plane 
distance while being more robust to noise.
\end{abstract}


%

\section{ Introduction}
\label{sec:intro}

Camera and Light Detection And Ranging (\lidarN) sensors and supporting software are common system elements in current
robotic and autonomous systems. For real-world operation, such systems' performance depends on the quality of intrinsic and extrinsic calibration parameters. Intrinsic calibration of a sensor is the process of ensuring that
obtained measurements are meaningful and valid. Intrinsic calibration of cameras is
relatively mature~\cite{zhang2000flexible, liebowitz1998metric, stein1995accurate} and many open-source packages are available~\cite{maye2013self, opencv, ros}. Similarly, intrinsic calibration is also required for \lidarsN; otherwise, the obtained point clouds are potentially inaccurate. However, even though \slidars have been on the market for over ten years, no publication supported by open-source calibration software is currently available, forcing academic researchers to either spend time developing their own solutions or to ignore concerns about the accuracy of their \lidar point clouds. 


\begin{figure}[t]
\centering
\includegraphics[width=1\columnwidth]{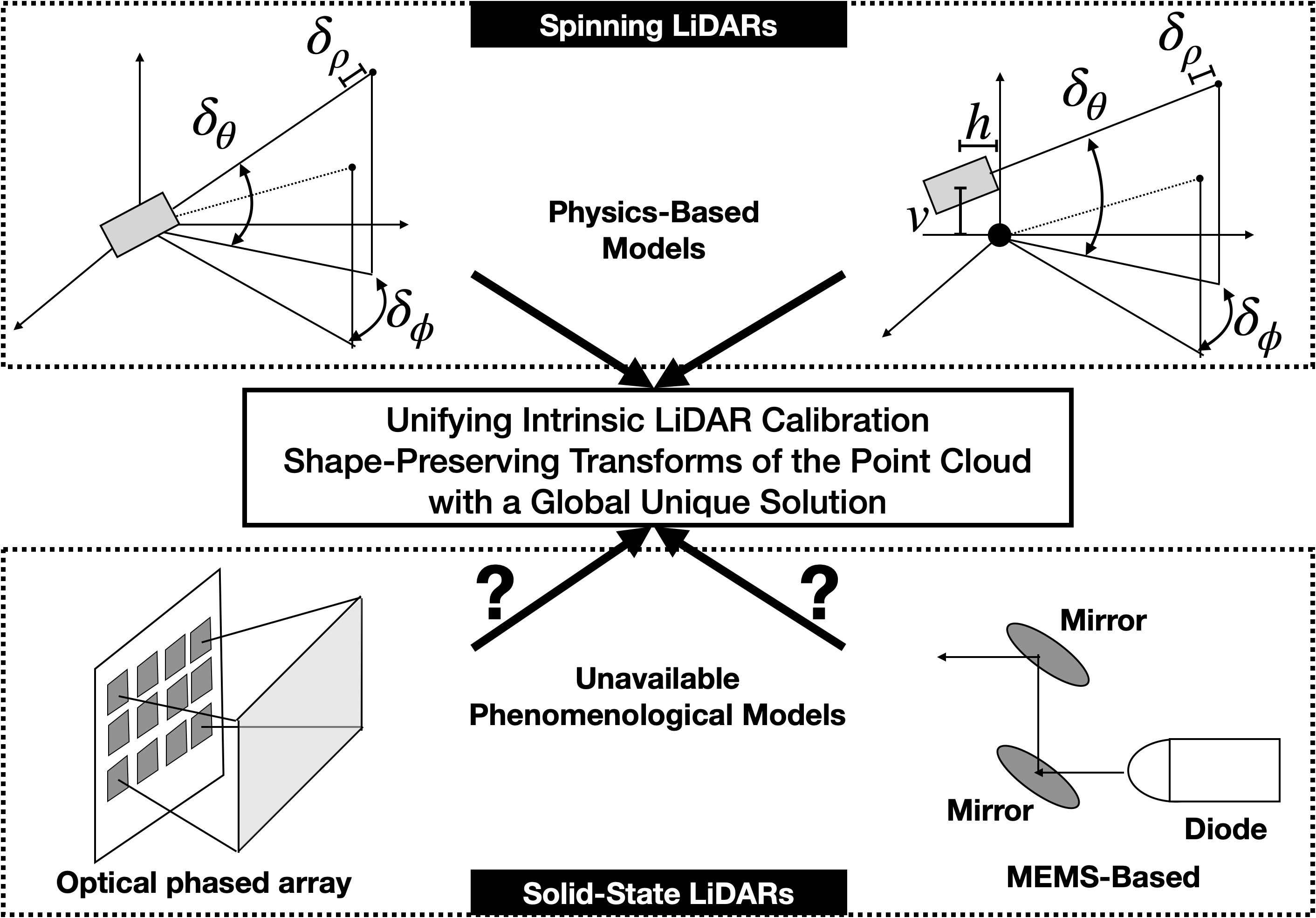}
\caption{Is it possible to calibrate a \lidar without modeling the physical mechanism itself? This paper says YES it is and it allows a unifying means to calibrate \lidars of all types. Top left: shows a simple physics-based calibration model for a spinning
    \lidarN. Top right: shows a similar model with more parameters, also for a
    spinning \lidarN. In the literature, the number of parameters can vary from three
    to ten. Bottom left and bottom right show, respectively, an optical phased array
    \sslidar and a MEMS-based \sslidarN. There are many design options and many
    manufacturing steps to fabricate these \lidarsN. This paper moves the attention to the
    geometry of the \lidarN's point cloud and proposes that calibration can be
    achieved via shape-preserving transformations.
 } 
\label{fig:FirstImage}
\squeezeup
\end{figure}

    Recently, another type of \lidarN, \sslidarN, is being brought to the market.
    Industry is turning to \sslidars because they are small, quiet and produce less
vibration. Due to their small size, they can be nicely integrated with headlights or carried by mobile robots such as drones. Moreover, \sslidars are more reliable especially for automotive applications due to the absence of the rotating mechanism. Because \slidars have a relatively
simple working mechanism, it has been possible to propose physics-based calibration models~\cite{mirzaei20123d,pandey2010extrinsic,glennie2010static,
nouira2016point,chan2013temporal,atanacio2011lidar,
muhammad2010calibration,bergelt2017improving},
see Sec.~\ref{sec:SpinningLiDAR}. For \sslidarsN, however, multiple types and design options are being proposed (see Sec.~\ref{sec:SolidStateLiDAR}), and thus it is going to be difficult to hand-craft a calibration model for each design of a \sslidarN. We ask, is it necessary to do this? Are type-specific calibration models even the right way to approach the problem?

In this work, we propose a unifying view of calibration for different types of
\lidarsN, as shown in Fig.~\ref{fig:FirstImage}: we tie the calibration process to the data produced by the sensor, and not the sensor itself, by applying shape-preserving transformations to a \lidarN's point cloud. Additionally, we prove
mathematically that the proposed calibration model is well-constrained (i.e., a
unique answer exists), whereas existing physics-based calibration models do not guarantee a unique set of calibration parameters. The uniqueness proof for the proposed method provides a guideline for target
positioning. In addition, uniqueness of the parameters is necessary if one is to investigate a globally convergent parameter solver.

\comment{
In this work, we are concerned with the problem of
multi-beam \lidar intrinsic calibration. We focus on narrow-pulsed time of flight
(ToF) method: spinning \lidars and \sslidarsN, as shown in Fig.~\ref{fig:FirstImage}.
}

\squeezeup
\subsection{Our Contributions}
\label{sec:Contributions}
\begin{enumerate}
     \item We propose a model for intrinsic \lidar calibration that is independent of the underlying technology. The new error model is based on
         shape-preserving transformations\footnote{Called \textit{similarity
         transformations} in projective geometry.}, which is a rigid body
         transformation plus a scale factor. Currently, intrinsic calibration models
         have been published for spinning LiDARs, while for \sslidarsN, the calibration models used by manufacturers have not found their way into the published literature. Current calibration models are tied to
         postulated physical mechanisms for how a \lidar functions, resulting in
         anywhere from three to ten parameters to be estimated from data. Instead, we
         \textit{abstract away from the physics of a \lidar type} (spinning head vs.
         solid-state, for example) and focus on the spatial geometry of the point
         cloud generated by the sensor. \textit{This leads to a unifying view of
         calibration} and results in an error model that can be represented as the
         action of the seven-dimensional matrix Lie group, $\Sim(3)$, on subsets of
         the measurements returned by the \lidarNoSpace.

     \item One may suspect that our proposed model is over parameterized. We, therefore, mathematically prove that given four targets with appropriate orientations, the proposed model is well-constrained (i.e., only one answer exists). The proof provides a guideline for target placement, which is an oriented tetrahedron.



     \item We give a globally convergent means to determine the calibration parameters. It utilizes an alternating optimization technique to solve the
         calibration problem, where the scaling parameter is determined by the
     bisection method, and the rest of the parameters ($\SE(3)$) are formulated as a
     Quadratically Constrained Quadratic Program (QCQP) where the Lagrangian dual
     relaxation is used. The relaxed problem on $\SE(3)$ becomes a Semidefinite Program (SDP) and
     convex~\cite{tron2015inclusion, carlone2015lagrangian, olsson2008solving,
     briales2017convex}. Therefore, the calibration problem can be solved globally and
     efficiently by off-the-shelf solvers~\cite{grant2014cvx}.

     \item We provide a MATLAB-based simulator for intrinsic \lidar calibration and
        validation that allows one to study the effects of target placement and the
        effectiveness of various calibration methods.

     \item In simulations, we validate that positioning the four
         targets as a tetrahedron can calibrate equally well a
         spinning \lidar and a \sslidarN. In experiments on a spinning \lidarNoSpace,
         we show that the proposed method works as well as two physics-based models in terms of Point-to-Plane (P2P) distance, while it is more robust when we induce systematic errors to the experimental measurements.

     \item We open-source all of the related software for intrinsic
        calibration, including implementations of the baseline methods, the simulator,
        the data sets and the solver; see~\cite{githubFileIntrinsic,
        githubFileLiDARSimulator, githubFileGlobalSim3Solver}. 

\end{enumerate}

\squeezeup
\section{Related Work}
This section overviews related work as well as spinning and \sslidarsN. 

\squeezeup
\subsection{Spinning \lidar}
\label{sec:SpinningLiDAR}
A spinning \lidar as used in this paper consists of $K$ laser emitter-receiver pairs
mounted at various positions and elevation angles on a rotating head, as shown in
Fig.~\ref{fig:FirstImage}. The lasers are pulsed at a fixed frequency. Each
revolution of the rotating head sweeps out a cone in space, called a \lidar scan,
composed of discrete return points traced out by each pulsed emitter-receiver. The
points associated with a single laser beam are called a ring. 

The measurements are typically viewed as being made in a spherical coordinate system.
The range measurement is made by estimating time-of-flight (ToF) through a clocking system on a
circuit board, the azimuth is estimated from the position of the rotating head, and
the elevation angle of a beam is determined by its mounting angle on the rotating
head.


Existing error models for \lidars are phenomenological in nature, that is, they are
tied to physical explanations of a sensor's design and/or
operation~\cite{mirzaei20123d,pandey2010extrinsic,glennie2010static,
nouira2016point,chan2013temporal,atanacio2011lidar,
muhammad2010calibration,bergelt2017improving}. Broadly speaking, two common sources of
uncertainty considered in spinning \lidar models are (1) errors in the
clocking systems, (one for ToF and one for pulsing the lasers) and (2) mechanical
errors associated with positioning of the beams on the spinning \lidarN. Less commonly considered are affects are due to ambient temperature or target size. Some authors
overlook that the measurements are actually made in a local coordinate frame for each
emitter-receiver pair, and are then transformed to a single ``global'' frame for the
\lidar unit, resulting in models with three to six parameters
\cite{pandey2010extrinsic, muhammad2010calibration, glennie2010static}, while others
take this into account and use as many as ten parameters~\cite{bergelt2017improving}.
Moreover, none of these models provides a guideline for target placement so that
the calibration model is well-constrained (i.e., a unique solution exists), see
Sec.~\ref{sec:TargetPlacementGuideline}.

\squeezeup
\subsection{Solid-State \lidar}
\label{sec:SolidStateLiDAR}
In comparison  to spinning \lidarsN, \sslidars are smaller, quieter, lower-cost, and
produce less vibration. Several types of \sslidar are coming to the market, including
those based on (1) Optical Phased Array (OPA) \cite{frederiksen2020lidar,
poulton2019long, poulton2018high}, and (2) Micro-Electro-Mechanical System (MEMS)
mirrors\footnote{Some authors argue MEMS-based \sslidars are not truly solid-state
devices due to the movable mirrors; the mirrors, however, are tiny compared to a
spinning \lidarN.}\cite{yoo2018mems, pelz2020matrix, druml20181d, lee2015optical,
wang2020mems, garcia2020geometric}. 

OPA-based \sslidars function similarly to a phased array in antenna theory. A single
OPA contains a significant number of emitters patterned on a chip. By varying the
phase shift (i.e., time delay) of each emitter, the direction of the resulting
wave-front or the spread angle can be adjusted. In particular, the sensor is able to
zoom in or out of an object on command. On the other hand, MEMS-based \sslidars use
only one or a few emitters, with the beams guided by a mirror system regulated by a
sophisticated controller. The controller seeks to steer the beam (or beams) in a
scanning pattern, such as a zig-zag \cite{yoo2018mems}. 

To perfectly align OPA emitters or MEMS mirrors is challenging. It is also hard to
maintain high accuracy under different temperatures and weather conditions. Thus, how
to properly calibrate a \sslidar is a critical problem.

The variety of \sslidars raises the question: is it necessary, or even possible, to
design/create parameterized models for every type of \sslidarN? This paper seeks to
present a unifying view of \lidar calibration by abstracting away from the physics of
a \lidar type (spinning vs solid state, for example), and focusing instead on the
spatial geometry of the point cloud generated by the sensor. This leads to a unifying
view of calibration.





\section{Proposed \lidar Intrinsic Model and Analysis}
\label{sec:IntrinsicCalibration}
The proposed calibration parameters are modeled as an element of the similarity Lie
group $\Sim(3)$, which can be applied to both
\slidars and \sslidarsN. An element of the group is an isometry composed with an additional
isotropic scaling, i.e., a rigid-body transformation with scaling. We note that this
is the most general form of of a shape-preserving transformation in 3D space (i.e. preserves ratios of lengths and angles). We also note that \eqref{eq:SimpleCalibration} and \eqref{eq:ComplexCalibration} are not shape preserving. It will be important to see in real data if our assumption of shape preservation is general enough to provide useful calibrations. 

An element of the group is given by
\begin{equation}
\label{eq:sim3tf}
        \H = \begin{bmatrix}
         s \R & \v \\
         \zeros & 1
     \end{bmatrix} \in \Sim(3),
\end{equation}
where $\R \in \SO(3)$ is the 3D rotation matrix, $\v \in \mathbb{R}^3$ is the
translation vector, and $s \in \mathbb{R}^+$ is the scale parameter. In particular,
an element has seven degrees of freedom and the action of $\Sim(3)$ on $\mathbb{R}^3$
is $\H \cdot \x = s\R \x + \v$, where $\x \in \reals^3$ and where $\H \cdot \x$ uses homogeneous coordinates while $s\R \x + \v$ uses Cartesian coordinates. From hereon, wherever convenient, we abuse notation by passing from regular coordinates to homogeneous coordinates without noting the distinction.

\subsection{Point-to-Plane Distance}
\label{sec:KernelizedP2P}
Let $\n[t]$ be a unit normal vector of a planar target and let $\p[0, t]$ be a fixed
point on the target. Let $\Xcal = \{ \x[i] | i=1,\dots,M~\text{and}~\x[i] \in
\mathbb{R}^3 \}$ denote a collection of \lidar returns on the target, and $M$ be
the number of points in the collection. Given the collection of \lidar returns $\Xcal$,
the cost of the intrinsic calibration problem is commonly defined by the
P2P distance:
\begin{equation}
    \label{eq:P2PDistance}
    J_t := \sum_{i=1}^M J_i(\x[i], \n[t], \p[0,t]):= \sum_{i=1}^M
   \left( \n[t][\transpose] \left(\x[i] - \p[0,t] \right)\right)^2,
\end{equation}
where $\n[t][\transpose] \left(\x[i] - \p[0,t] \right)$ is the orthogonal projection
of the measurement onto the normal vector. Then the calibration problem can be formulated as follows
\begin{problem}
\label{prob:intrinsic_calibration}
For a given collection of points $\Xcal_t = \{\x[i]\}_{i=1}^{M_t}$, possibly
from multiple targets $t\in\{1, \ldots, T\}$, we seek a similarity transformation
$\H[][\star]$ that solves
\begin{equation}
\label{eq:OptSolution}
    \min_{\H \in \Sim(3)}
   \sum_{t=1}^T\sum_{i=1}^{M_t}
    \left( \n[t][\transpose] \left( \H \cdot \x[i] - \p[0,t] \right) \right)^2.
\end{equation}
\end{problem}

\comment{
\addd{The underlying assumption of \eqref{eq:OptSolution} is the independence
of different collections of points, as each collection of points has its own
calibration parameters and there is no correlation between different collections. 
However, points from a \sslidar share the same geometry such as a wafer. Therefore,
the independence assumption is not true and leads to discontinuity of different
collection of points.}

\addd{We propose using all the \sslidar returns (possibly consisting of multiple
collections) on a target and kernelizing the points while calibrating a collection
of points. The kernelized P2P cost is defined as
\begin{problem}
\label{prob:KernelizedIntrinsicCalibration}
\add{Let the \lidar returns ($\pc_t$) on a target composite of $K$ collections and each
collection has $M_k$ points and $\pc_t = \{\Xcal_k|\Xcal_k = \{\x[i]\in
\reals^3|i=1,...M_k\}\}_{k=1}^K$ be the \lidar returns of a target.} The
equation~\eqref{eq:OptSolution}becomes 
\begin{equation}
\label{eq:KernelizedOptSolution}
    \min_{\H \in \Sim(3)}
    \sum_{t=1}^T\sum_{k=1}^{K_t}\sum_{i=1}^{M_k}
   k(\x[i], \mu_k) \lvert \n[t][\transpose] \left( \H \cdot \x[i] - \p[0,t] \right)
   \rvert,
\end{equation}
where $\mu_k$ is the centroid of the $k$-th collection on the $t$-th target and
$k(\cdot, \cdot)$ is the squared exponential kernel~\cite[Chapter
4]{rasmussen2006gaussian}
\begin{equation}
\label{eq:Kernel}
k(\x[i], \mu_t) = \sigma\text{exp}\left(\frac{-(\x[i]-\mu_t)^2}{2l}\right),
\end{equation}
where $\sigma$ and $l$ are the signal variance the length scale, respectively.
\end{problem}
}
}

\begin{remark} The transformation $H$ applies to the entire collection of points $\Xcal$. How to define the collection is addressed in Sec.~\ref{sec:CollectionOfPoints}.
\end{remark}

\begin{remark}
In practice each target's normal vector and a point on the target must be estimated
from data; see \cite{huang2019lidartag,huang2020improvements}.
\end{remark}

\subsection{Parsing Points on a Target to Collections of Points}
\label{sec:CollectionOfPoints}
The specific method for parsing a target’s point cloud and the number of calibration
transformations depends on the nature of the sensor. For a \slidarN, the points $\pc$
on a target are typically separated by beam number so each beam has its own
optimization problem~\eqref{eq:OptSolution}. For example, for a $K$-beam \slidarN, a natural choice for
the set of calibration parameters is $\{\H[k] | k=1,\dots,K~\text{and}~\H[k] \in
\Sim(3)\}$, where $H_k$ is the similarity transform for the $k$-th ring. Similarly, one could also split each ring into quarter circles ($90^\circ$ arcs), which would lead to $4K$ calibration parameters. On the one hand, for an OPA-based \sslidarN, we propose to form an $m\times n$ grid over the planar target and
then parse the point cloud based on each point's projection onto the target along the
normal of each grid. This way, the set of calibration parameters becomes
$\{\H[k] | k=1,\dots,m\times n~\text{and}~\H[k] \in \Sim(3)\}$. On the other hand, one may wish to stick with parsing via ``rows or columns''. In the end, it's a choice that has to be made by the calibrator.

\squeezeup
\subsection{Overfitting, Underfitting, and Model Mismatch}
\label{sec:ModelDicussion}

We first point out that the proposed calibration model can suffer from overfitting (i.e., a
unique $\H[][*]$ does not exist) if less than four targets are used to calibrate a
\lidarN. In particular, for a single planar target, the rotation about the target
normal $\n[t]$, two components of the translation vector $\v$ (translation in the
plane of the target), and the scale $s$ are unconstrained. For two planar targets,
translation along the line orthogonal to the targets' normal vectors and scale are
unconstrained. Therefore, understanding how target positioning constrains the
calibration model is essential. As mentioned above, underfitting could occur if the
data is parsed into rings, but say one of the rings has distance errors that vary with rotation angle. In this case, splitting each ring into different arcs will eliminate the underfitting.

Our calibration model is not a phenomenological model. The existing published models postulate how a LiDAR works and hence are phenomenological models. These postulated models result in different numbers of parameters, ranging from three to eight. Our ``calibration model'' abstracts away from the physical device and focuses on transforming the point cloud so that the $(x,y,z)$ values are correct.

\subsection{Target Placement Guideline}
 \label{sec:TargetPlacementGuideline}
Theorem~\ref{thm:uniqueness} given below provides
practically realizable conditions under which a unique answer exists to
Problem~\ref{prob:intrinsic_calibration}. The proof is given in
Appendix~A in~\cite{huang2020global}. We further propose a method to globally compute the
unique solution, see Sec.~\ref{sec:ConvexRelaxation}.


\comment{
\addd{Theorem~\ref{thm:uniqueness} states that there exists one unique answer to
both Problem~\ref{prob:intrinsic_calibration} and
Problem~\ref{prob:KernelizedIntrinsicCalibration}. The proof is given in
Appendix~\ref{sec:Uniqueness}. We further introduce the \LanDual and formulate the
problem as a QCQP. The resulting problem becomes a convex SDP; therefore, 
the global optimum can be found, see Sec.~\ref{sec:ConvexRelaxation}.}
}

\begin{figure}[t]%
\centering
\includegraphics[width=0.9\columnwidth, trim={0.0cm 00cm 0cm 0cm},clip]{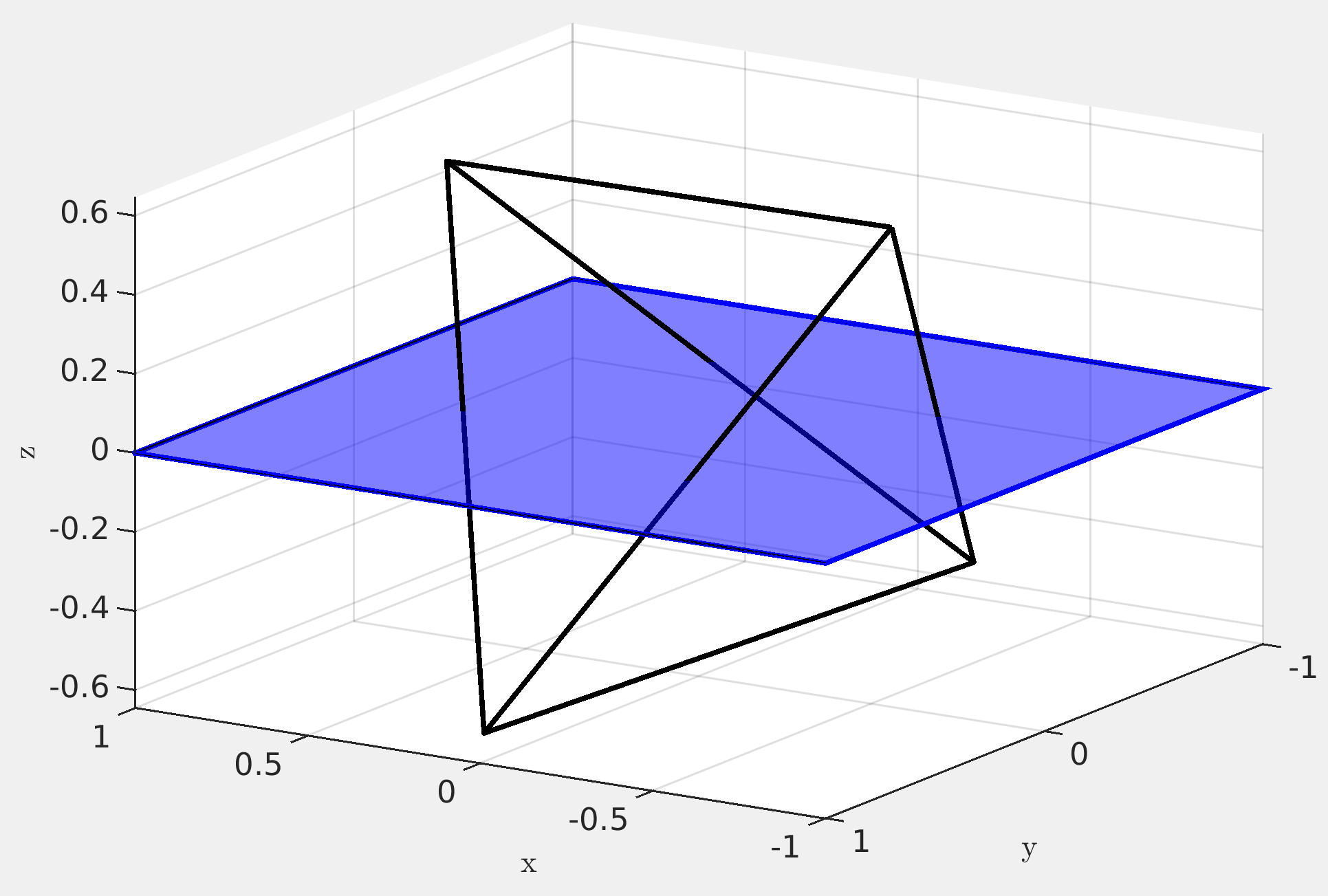}
\caption{An oriented tetrahedron (black) and a ring plane (blue) for target positioning that satisfies the conditions of
Theorem~\ref{thm:uniqueness}.}
\label{fig:oriented-tetrahedron}%
\end{figure}
%

For a subset $\Scal \subset \reals^3$, let $[\Scal] \text{ and } [\Scal]^\perp$ denote its span and
its orthogonal complement, respectively. Let $\{\e[1], \e[2], \e[3]\}$ be the canonical basis for
$\reals^3$. We also denote $\P\subset\reals^3$ a set traced out by a single ring (i.e., a ring plane) of
a perfectly calibrated \slidarN. Without loss of generality, we assume
$\P=[\e[3]]^\perp$.

Consider four targets with unit normal vectors $\n[i]$ and let $\p[0,i]$ be a point
on the $i$-th target. For $1\le i \le 4$, the plane defined by the $i$-th target is
\begin{align*}
\V[i] &:= \p[0,i] + [\n[i]]^\perp\\
&:=  \p[0,i] + \{ v \in \real^3~|~ \n[i][\top] v =0  \},
\end{align*}
the plane formed by the set of vectors orthogonal to the normal vector $ \n[i]$, and subsequently translated by $\p[0,i]$.

%
\begin{assumptionN}[Normal Vectors]
    \label{asm:assumtionN}
    All sets of three distinct vectors from $\{ \n[1], \n[2], \n[3], \n[4], \e[3]\}$
    are \li.
\end{assumptionN}

Under Assumption~\ref{asm:assumtionN}, for each $i \neq j \in \{1, 2, 3, 4 \}$, there exist unique points $\p[ij]:=\P \cap \V[i] \cap \V[j]$. All total, there are $\binom{4}{2} = 6$ such intersection points.

\begin{assumptionB}[Basis Vectors]
    \label{asm:assumtionB}
     Each of the following sets of vectors\footnote{There are $\binom{6}{2}=15$ possible pairs of vectors, of which we use 13 to establish uniqueness.} is a basis of the ring plane: 
    \begin{enumerate}[(a)]
        \item $\{ \p[12], \p[13] \}$, $\{ \p[13], \p[14] \}$, $\{ \p[14], \p[12] \}$,
        \item $\{ \p[12], \p[23] \}$, $\{ \p[23], \p[24] \}$, $\{ \p[24], \p[12] \}$,
        \item $\{ \p[13], \p[23] \}$, $\{ \p[23], \p[34] \}$, $\{ \p[34], \p[13] \}$,
        \item $\{ \p[14], \p[24] \}$, $\{ \p[24], \p[34] \}$, $\{ \p[34], \p[14] \}$,
        \item $\{ \p[14], \p[23] \}$
    \end{enumerate}
\end{assumptionB}

   

We note that for a perfectly calibrated \lidarN, if $\H$ is
         the identity element of $\SE(3)$, then for all $i
    \neq j \in \{1, 2, 3, 4 \}$, it must be true that $\H\cdot\p[ij] \in  \V[i] \cap
     \V[j].$ The following shows that the converse is true when the targets are appropriately positioned and oriented.



\begin{theorem}
    \label{thm:uniqueness}
    Assume that Assumptions~\ref{asm:assumtionN} and~\ref{asm:assumtionB}
     hold and let $\H \in \Sim(3)$. If for each $i
    \neq j \in \{1, 2, 3, 4 \}$, $\H\cdot\p[ij] \in  \V[i] \cap
     \V[j]$, then
     \begin{equation}
         \H =
         \begin{bmatrix}
            \I & \zeros \\
            \zeros & 1
         \end{bmatrix}.
     \end{equation}
 \end{theorem}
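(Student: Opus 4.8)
The plan is to split the argument into a purely linear phase that pins down the action of $\H$ on the ring plane, followed by a short phase that invokes the similarity (scaled-rotation) structure. Writing the action as $\H\cdot\x = s\R\x+\v$, I would first observe that both $\p[ij]$ and $\H\cdot\p[ij]$ lie on the line $\V[i]\cap\V[j]$ --- the former by the definition of $\p[ij]$, the latter by hypothesis. Since Assumption~\ref{asm:assumtionN} makes $\n[i],\n[j]$ linearly independent, this intersection is a genuine line with direction $\n[i]\times\n[j]$. Subtracting the plane equations $\n[i][\transpose](\H\cdot\p[ij]-\p[0,i])=0$ and $\n[i][\transpose](\p[ij]-\p[0,i])=0$ (and the analogues for $j$) shows that $\mathbf{w}_{ij}:=(s\R-\I)\p[ij]+\v$ is orthogonal to both $\n[i]$ and $\n[j]$, hence $\mathbf{w}_{ij}=t_{ij}\,(\n[i]\times\n[j])$ for some scalar $t_{ij}$. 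The theorem becomes equivalent to proving $t_{ij}=0$ for all six pairs, together with $s=1$, $\R=\I$, $\v=\zeros$.

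The geometric backbone I would exploit is that the six points form a \emph{complete quadrilateral} inside $\P$. For fixed $i$, the three points $\{\p[ij]\}_{j\neq i}$ all lie in $\P\cap\V[i]$, which by Assumption~\ref{asm:assumtionN} is a line $\ell_i$, and $\p[ij]=\ell_i\cap\ell_j$. Read this way, the three independence conditions in each of groups (a)--(d) of Assumption~\ref{asm:assumtionB} say precisely that the three collinear points on $\ell_i$ are pairwise independent, i.e.\ each $\ell_i$ misses the origin (so that scaling cannot be silently absorbed along a line), while condition (e) supplies one further independence between the ``opposite'' pair $\p[14],\p[23]$.

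For the linear phase I would use that $\H$ is affine and therefore respects the affine dependency $\sum_{j\neq i}\lambda_{ij}\p[ij]=\zeros$ with $\sum_{j\neq i}\lambda_{ij}=0$ among the collinear points on each $\ell_i$. Applying this to the displacements gives $\sum_{j\neq i}\lambda_{ij}\mathbf{w}_{ij}=(s\R-\I)\sum_{j\neq i}\lambda_{ij}\p[ij]+(\sum_{j\neq i}\lambda_{ij})\v=\zeros$, so that $\sum_{j\neq i}\lambda_{ij}t_{ij}(\n[i]\times\n[j])=\zeros$; expanding one cross product in the basis of the other two (legitimate because Assumption~\ref{asm:assumtionN} keeps the vectors $\n[i]\times\n[j]$ pairwise independent inside $[\n[i]]^\perp$) yields two homogeneous linear relations among the $t_{ij}$ for each line. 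Collecting the relations from all four lines and closing the system with condition (e), I expect the unique solution to be $t_{ij}=0$, hence every $\mathbf{w}_{ij}=\zeros$. The differences $(s\R-\I)(\p[ij]-\p[kl])=\zeros$ then span the two in-plane directions of $\P=[\e[3]]^\perp$, giving $(s\R-\I)\e[1]=(s\R-\I)\e[2]=\zeros$ and, by back-substitution, $\v=\zeros$; equivalently the first two columns of $s\R$ are $\e[1]$ and $\e[2]$.

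The remaining phase is short: writing $s\R=[\,\e[1]\ \e[2]\ \mathbf{c}\,]$ and imposing $(s\R)^\transpose(s\R)=s^2\I$ forces $\mathbf{c}\perp\e[1],\e[2]$ and $|\mathbf{c}|=s=1$, so $\mathbf{c}=\pm\e[3]$; positivity of $\det(s\R)=s^3$ excludes $-\e[3]$, leaving $s=1$, $\R=\I$, $\v=\zeros$, i.e.\ $\H$ is the identity. The main obstacle is the linear phase: confirming that the relations extracted from the four lines, once augmented by condition (e), admit only the trivial solution. This is exactly where the thirteen basis conditions earn their keep --- (a)--(d) guaranteeing that the four lines avoid the origin and (e) removing the last degree of freedom --- and the bookkeeping that they collectively drive the $t_{ij}$-system to full rank is where I expect the real work to lie.
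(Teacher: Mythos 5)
Your setup is sound and parallels the paper's preliminary facts: subtracting the plane equations does give $\mathbf{w}_{ij}:=(s\R-\I)\p[ij]+\v\in[\n[i],\n[j]]^\perp=[\n[i]\times\n[j]]$, the collinearity of $\{\p[ij]\}_{j\neq i}$ on $\ell_i=\P\cap\V[i]$ yields the affine dependencies you invoke, and your endgame (from $(s\R-\I)|_{\P}=\zeros$ and $\v=\zeros$ to $s=1$, $\R=\I$ via orthonormality of the columns of $s\R$) is exactly the paper's ``simple fact.'' The genuine gap is the step you explicitly defer: the claim that the homogeneous relations $\sum_{j\neq i}\lambda_{ij}t_{ij}(\n[i]\times\n[j])=\zeros$, $i=1,\dots,4$, close to $t_{ij}=0$ for all six pairs. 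This is not bookkeeping that is guaranteed to succeed; under Assumptions~\ref{asm:assumtionN} and~\ref{asm:assumtionB} alone the system has a nontrivial kernel. Consider four target planes sharing a common point $\q\notin\P$ (the four-target analogue of the paper's own three-target Remark). Generic such configurations satisfy both assumptions: any three of the five vectors can be independent, and the six points $\p[ij]$, being the traces on $\P$ of six distinct lines through $\q$, generically furnish all thirteen bases. Yet $\H\cdot\x:=s\x+(1-s)\q$ is a non-identity element of $\Sim(3)$ that slides each $\p[ij]$ along the line $\V[i]\cap\V[j]$ (which passes through $\q$), so $\mathbf{w}_{ij}=(s-1)(\p[ij]-\q)$ is a \emph{nonzero} multiple of $\n[i]\times\n[j]$; and since $\x\mapsto(s-1)(\x-\q)$ is itself affine, these nonzero $t_{ij}$ satisfy every collinearity relation you can write down. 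So no amount of row reduction will drive your system to full rank; the linear phase cannot be completed without an additional hypothesis (non-concurrency of the four planes) or a genuinely finer use of the geometry.

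For comparison, the paper does not argue through the $t_{ij}$ directly: it proceeds by exhaustion on $\dim(s\R-\I)(\P)$, disposes of dimensions $0$ and $1$ by span arguments, and in the rank-two case reduces the required concurrency of the six lines $(s\R-\I)\p[ij]+[\n[i],\n[j]]^\perp$ to a rank condition on a $3\times 3$ matrix in coefficients $c_1,\dots,c_4$, asserting that the rank drops only when two of the $c_k$ vanish. The concurrent-plane configuration above is precisely the stress test for that determinant step as well (all $c_k$ nonzero while the determinant $-c_1c_2c_3+\beta c_1c_3c_4+\gamma c_1c_2c_4-\alpha c_2c_3c_4$ vanishes), so whichever route one takes, identifying and excluding this configuration is where the real content of the theorem lies. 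Your write-up currently treats it as deferred algebra rather than as the central obstruction, and that is the part you must confront to have a proof.
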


\textbf{Proof:} See Appendix~\ref{sec:Uniqueness} in~\cite{huang2020global}.\\
An immediate Corollary of Theorem~\ref{thm:uniqueness} is the uniqueness of the transformation, as stated below.\\
\textbf{Corollary}. Let $\widetilde{\H} \in \Sim(3)$ and define $\widetilde{\p}_{ij}:=\widetilde{\H}(\p[ij])$. Then under Assumptions~\ref{asm:assumtionN} and~\ref{asm:assumtionB}, the only element $\G \in \Sim(3) $ satisfying $\G(\widetilde{\p}_{ij}) \in \V[ij]$ all $i \neq j$ is $\G = \widetilde{H}^{-1}$.\\

Assumption~\ref{asm:assumtionN} assumes linear independence
($\binom{5}{3}=10$) for any three of the target normal vectors, but does not directly address their ``degree of independence,'' that is, their condition number.
Similarly, Assumption~\ref{asm:assumtionB} only involves linear independence. In practice, one wonders if it is important to position the targets so that the two sets of vectors are well conditioned? We will later show that the normal vectors are most important. In Fig.~\ref{fig:oriented-tetrahedron}, we show an oriented tetrahedron\footnote{Each of the four planar targets defines an
infinite plane. The infinite planes create a tetrahedron.} where the
normal vector of each face\footnote{Faces of the tetrahedron are extended planes
of the targets. Therefore, normal vectors of faces are normal vectors of targets.}
satisfies Assumption~\ref{asm:assumtionN} and the orientation of the
tetrahedron intersecting with the ring plane fulfills
Assumption~\ref{asm:assumtionB}.

\begin{remark}
    The plane defined by the $i$-th target is $\mathcal{V}_i:= \p[0,i] +
    \n[i][\perp]$. If we were to only use three targets, there would exist a unique point $\q[0] \in \mathbb{R}^3$ defined by
    \begin{equation}
    \label{eq:PoleOfCalibration}
        \q[0] :=\bigcap\limits_{i=1}^3 \mathcal{V}_i.
    \end{equation} 
    When \eqref{eq:PoleOfCalibration} holds, the scale factor, $s$, is not unique;
    see Fig.~\ref{fig:PropositionIllustration}. 
    In other words, a ring plane passing through the point $\q[0]$ leads to the scale factor $s$ being unconstrained and therefore a unique answer does not exist. Therefore, it was critical to analyze target placement for the proposed method. A similar analysis seems to be missing for the baseline methods.
\end{remark}

\begin{figure}[t]%
\centering
\includegraphics[width=0.9\columnwidth, trim={0cm 0cm 0cm 0cm},clip]{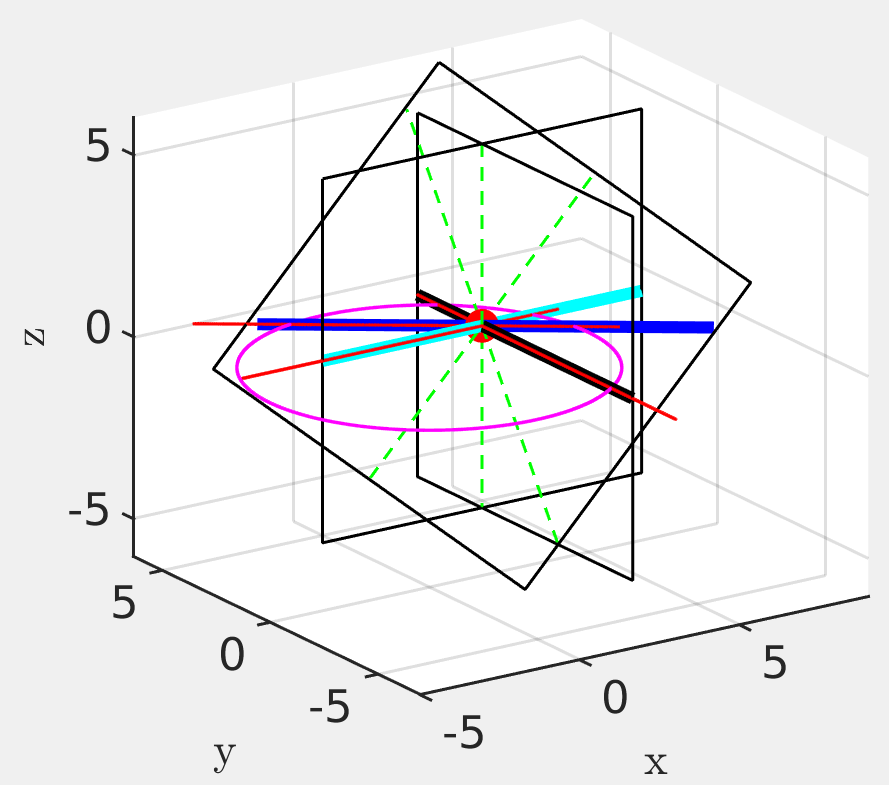}
\caption[]{Illustration of a degenerate case. The black squares are 
different planes with the normal vectors being \lind. The dotted green lines indicate the intersection of two planes. The magenta circle shows the
\lidar ring passing through $q_0$ (red center bullet) in \eqref{eq:PoleOfCalibration}. The thick blue, cyan
and black lines are the intersections of the ring planes and the targets. The P2P cost of
the intersections are therefore zero. The red lines show the results of 
applying a transformation $\H = \begin{bmatrix}s\I & (1-s)\q[0]\\\zeros&
1\end{bmatrix}$ to the intersection lines. The cost remains zero because the transformed lines
stay on the targets. 
}%
\label{fig:PropositionIllustration}%
\squeezeup
\end{figure}

\section{Baseline Intrinsic Calibration Methods}
\label{sec:Baseline}
There are two standard calibration models for spinning \lidarsN. They are
based on spherical coordinates $(\rho, \theta, \phi)$, referred to as range,
elevation and azimuth, 
\begin{equation}
\label{eq:spherical2Cartesian}
\begin{bmatrix}
x\\y\\z
\end{bmatrix} = f(\rho, \theta, \phi)
=\begin{bmatrix}
\rho\cos\theta\sin\phi\\
\rho\cos\theta\cos\phi\\
\rho\sin\theta
\end{bmatrix},
\end{equation}
and the inverse function, 
\begin{equation}
\label{eq:Cartesian2spherical2}
\begin{bmatrix}
\rho\\\theta\\\phi
\end{bmatrix} = f^{-1}(x,y,z)
=\begin{bmatrix}
    \sqrt{x^2+y^2+z^2}\\
    \asin[\frac{z}{\sqrt{x^2+y^2+z^2}}]\\
    {\rm atan2}(x,y)
\end{bmatrix}.
\end{equation}


\subsection{3-Parameter Model}
The most basic model~\cite{pandey2010extrinsic} assumes the \lidar measurements are
made in spherical coordinates, $(\rho, \theta,\phi)$. Corrections to a measurement
are given by a collection of offsets \mbox{$\alpha:=(\delta_\rho, \delta_\theta,
\delta_\phi).$}  Expressing the calibrated measurement in Cartesian coordinates gives
\begin{equation}
\label{eq:SimpleCalibration}
        \Gamma_{\alpha}(\rho, \theta, \phi) := 
        \begin{bmatrix}
    (\rho+\drho)\cos(\dtheta)\sin(\phi-\dphi)\\
    (\rho+\drho)\cos(\dtheta)\cos(\phi-\dphi) \\
    (\rho+\drho)\sin(\dtheta)
        \end{bmatrix}
\end{equation}

\begin{remark} 
    \textbf{(a)} The nominal elevation for each ring is taken as zero;,
i.e., $\theta=0.$  \textbf{(b)} While it is not necessarily a drawback, most \lidar
 interfaces only return a Cartesian representation of a measured point. For example, the ROS driver from Velodyne only provides the Cartesian coordinate and the intensity of a point. Hence, the
user must assure the transformation to spherical coordinates, make the measurement
correction coming from the calibration, and then transform back to Cartesian
coordinates. 
\end{remark}

\squeezeup
\subsection{6-Parameter Model}
This model \cite{glennie2010static, nouira2016point} also works in spherical coordinates. In addition to the three offsets
above, it includes $s$, a scale factor of $\rho$, $h$, a horizontal offset of the
origin, and $v$, a vertical offset for the origin. The correction model becomes
\begin{equation}
\label{eq:ComplexCalibration}
         \bar{\Gamma}_{\alpha}:= 
\left[
\begin{array}{l}
    (s\rho+\delta_\rho)\cos\delta_\theta\sin(\phi-\delta_\phi) - h\cos(\phi-\delta_\phi)\\
    (s\rho+\delta_\rho)\cos\delta_\theta\cos(\phi-\delta_\phi) + h\sin(\phi-\delta_\phi)\\
    (s\rho+\delta_\rho)\sin\delta_\theta + v\\
\end{array}\right],
\end{equation}
and therefore $\alpha:=(\delta_\rho, \delta_\theta,
\delta_\phi, s, h, v).$ 

\subsection{Can these transformations be expressed by elements of $\Sim(3)$?}
The short answer is no. When viewed in Cartesian coordinates, the calibration model
\eqref{eq:ComplexCalibration} is fundamentally nonlinear and can be expressed as $\R[1](\R[2]\t[1] + \t[2])$,
where $\R[1], \R[2], \t[1] \text{ and } \t[2]$ depend on the measured point, 
$$\x[][\transpose] = [x,y,z] \xrightarrow[]{f^{-1}} (\rho, \theta, \phi),$$
\begin{align}
    \label{eq:ExpandComplexModelMatrixForm}
    \R[1]&=
             \begin{bmatrix}
                 \sin(\phi-\dphi) & -\cos(\phi-\dphi) & 0\\
                 \cos(\phi-\dphi) &  \sin(\phi-\dphi) & 0\\
                 0 &0 &1
             \end{bmatrix},\\ \nonumber
    \R[2]\t[1]+\t[2] &=
             \begin{bmatrix}
                 \cos(\dtheta) & 0 & -\sin(\dtheta)\\
                 0 &1 &0\\
                 \sin(\dtheta) & 0 & \cos(\dtheta)
             \end{bmatrix}
             \begin{bmatrix}
                 s\rho+\drho\\
                 0
                 \\0
             \end{bmatrix} +
             \begin{bmatrix}
                 0\\h\\
                 v
             \end{bmatrix}.
\end{align}
The calibration first applies a corrected elevation angle to a range value that has been offset and scaled. Next, the  $(y,z)$ position of the origin is offset, and lastly, a corrected rotation about the $z$-axis is applied.

\squeezeup
\subsection{Cost function for baseline models}
\label{sec:ProblemStatement}

 For a given collection of points ${\cal PC}$, possibly from multiple targets
 $t\in\{1, \ldots, T\}$, we seek calibration parameters that solve 
\begin{equation}
\label{eq:OptSolutionBaseline}
    \min_{\alpha} 
   \sum_{t=1}^T\sum_{i=1}^{M_t} 
    | \n[t][\transpose] \left(F(\x[i], \alpha) - \p[0,t] \right) |,
\end{equation}
where 
\begin{enumerate}
       \item for baseline model $\BL_1$~\cite{pandey2010extrinsic} in \eqref{eq:SimpleCalibration}, $\alpha =(\drho, \dtheta, \dphi)$ and
     $$F({\x}, \alpha):= \Gamma_{\alpha} \circ f^{-1}({\x});$$
      \item for baseline model $\BL_2$~\cite{glennie2010static, nouira2016point} in \eqref{eq:ComplexCalibration}, $\alpha =(\drho, \dtheta, \dphi, s, h, v)$ and 
      $$F({\x}, \alpha):= \bar{\Gamma}_{\alpha} \circ f^{-1}({\x}).$$
\end{enumerate}

\begin{remark}
   For the 3-parameter model $\BL_1$, a single planar target is sufficient to
   uniquely determine a calibration. For the 6-parameter model $\BL_2$,
   non-uniqueness can occur as outlined in Theorem~\ref{thm:uniqueness} and
   Fig.~\ref{fig:PropositionIllustration}.
\end{remark}

\squeezeup
\section{Global Optimization: from Min to Min-Min}
\label{sec:ConvexRelaxation}
Theorem~\ref{thm:uniqueness} states that a unique answer exists. In this section, we
propose a method to find it. In particular, we show a
globally convergent algorithm to determine the element of the Lie Group that globally
minimizes the cost function~\eqref{eq:P2PDistance}. This is done by converting the minimization
problem in~\eqref{eq:OptSolution} to a Min-Min problem. The inner minimization problem has an efficient global solution by results in~\cite{olsson2008solving, briales2017convex}. The outer minimization is scalar, and it is easy to limit the range of interest to a compact set.

\squeezeup
\subsection{Problem Statement and Compact Sets}
\label{sec:CompactSets}
The intrinsic calibration problem is defined in~\eqref{eq:OptSolution}:
\begin{align}
    \label{eq:minOverSim}
    \min_{s,\R,\v}J(s,\R,\v; \Xcal_t, \n[t],\p[0,t]),
\end{align}
where the cost is defined as
\begin{equation}
    \label{eq:P2PDistance2}
  J(s, \R, \v):= \sum_{t=1}^T\sum_{i=1}^{M_t}
    \lvert \n[t][\transpose] \left( s\R \cdot \x[i] +\v- \p[0,t] \right) \rvert^2.
\end{equation}
It is straightforward to show that $J$ is a continuous function on $\Sim(3)$. 
As a set, we write $\Sim(3) = \reals^+ \times \SO(3) \times \reals^3$, where $\reals^+:= \reals > 0$. Note that $\SO(3)$ is compact. To guarantee that a minimum exists, we consider compact subsets of $s\in\reals^+$ and $\v\in\reals^3$ and define
\eqref{eq:minOverSim}:
\begin{equation}
\begin{aligned}  
    J^\ast &:= \min_{s,\R,\v} J(s,\R,\v)\\
    s^\ast, \R[][\ast], \v[][\ast] &:= \argmin_{s,\R,\v} J(s,\R,\v).
\end{aligned}
\label{eq:GoldMine}
\end{equation}

\begin{algorithm}[t]
\DontPrintSemicolon
\KwInput{Collections of points ($\Xcal_t$) and corresponding targets ($\n[t], \p[0,t]$)}
\KwOutput{Global optimum $\H[][*]\in \Sim(3)$}
\KwInitialization{Arbitrary initialization is allowable. We use $\R = \I, \v = \zeros,$ and $s \in [0.8, 1.2]$}
  \While{$k<$ MAX\_ITER}
  {
  \tcp{Scale the collections of points}
  $\pre[k][U]\Xcal_t^s \leftarrow \pre[k][U]s\cdot\Xcal_t$,~~
  $\pre[k][L]\Xcal_t^s \leftarrow \pre[k][L]s\cdot\Xcal_t $\;

  \tcp{Compute the optimum $\R,\v$ given the scaled points}
  \scalebox{0.9}{$\pre[k][L]\R\leftarrow \pre[k][L]\Xcal_t^s\text{~and~(42) in \cite{huang2020global}},~\pre[k][L]\v \leftarrow \pre[k][L]\R\text{~and~(36) in \cite{huang2020global}}$}\;
  \scalebox{0.9}{$\pre[k][U]\R\leftarrow \pre[k][U]\Xcal_t^s\text{~and~(42) in \cite{huang2020global}} ,~\pre[k][U]\v \leftarrow \pre[k][U]\R\text{~and~(36) in \cite{huang2020global}}$}\;

  \tcp{Compute the P2P by current $\R,\v$}
  $\pre[k][L]J\leftarrow$ (13) in \cite{huang2020global},~~
  $\pre[k][U]J\leftarrow$ (13) in \cite{huang2020global}\;
  \tcp{Update $\pre[k+1][U]s$ and $\pre[k+1][L]$}
  $\pre[k+1][L]s\leftarrow$ (21) in \cite{huang2020global},~~
  $\pre[k+1][U]s\leftarrow$ (21) in \cite{huang2020global}\;
  }
  \Return $s^*, \R[][*],\v[][*]$

  %
\caption{Proposed global optimizer}
\label{alg:ConvexAlg}
\end{algorithm}

\squeezeup
\squeezeup
\subsection{From Min to Min-Min}
\label{sec:MinMin}
We use again the fact that $\Sim(3) = \reals^+ \times \SO(3) \times \reals^3$ to redefine \eqref{eq:minOverSim}
as a \mm problem.
\begin{proposition}
\label{prop:MinToMinmin}
\begin{equation}
    \label{eq:MinEqualMinmin}
    \min_{s,\R,\v} J(s,\R,\v) = \min_s\min_{\R,\v}J(s,\R,\v),
\end{equation}
where the minimums are taken over the same compact sets used in \eqref{eq:GoldMine}.
\end{proposition}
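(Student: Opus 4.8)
The plan is to prove the equality by the standard two-sided inequality for minimizing a continuous function over a product domain, taking care that every minimum is actually attained, so that writing ``$\min$'' (rather than ``$\inf$'') is justified. Throughout, I restrict $s$ and $\v$ to the compact sets fixed in Sec.~\ref{sec:CompactSets}, say $s\in S:=[s_{\min},s_{\max}]\subset\reals^+$ and $\v\in V\subset\reals^3$ compact, so that the feasible set $S\times\SO(3)\times V$ is compact (recall $\SO(3)$ is compact). Since $J$ is continuous on $\Sim(3)$, the extreme value theorem guarantees that the left-hand side is attained: there is a minimizer $(s^\ast,\R[][\ast],\v[][\ast])$ with $J^\ast:=J(s^\ast,\R[][\ast],\v[][\ast])=\min_{s,\R,\v}J(s,\R,\v)$, exactly as set up in \eqref{eq:GoldMine}.

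Next I would define the marginal (value) function $g(s):=\min_{\R,\v}J(s,\R,\v)$ and first argue it is well defined. For each fixed $s$, the map $(\R,\v)\mapsto J(s,\R,\v)$ is continuous on the compact set $\SO(3)\times V$, so by the extreme value theorem the inner minimum exists; hence $g(s)$ is attained for every $s\in S$. This is the only place compactness of the $(\R,\v)$-domain is needed, and it is precisely why the compact sets were introduced in Sec.~\ref{sec:CompactSets}.

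The equality then follows from two elementary inequalities. For the lower bound, fixing $s$ only shrinks the domain, so $g(s)=\min_{\R,\v}J(s,\R,\v)\ge\min_{s',\R,\v}J(s',\R,\v)=J^\ast$ for every $s\in S$; taking the minimum over $s$ gives $\min_s g(s)\ge J^\ast$. For the upper bound, I evaluate $g$ at the joint minimizer's scale: $g(s^\ast)=\min_{\R,\v}J(s^\ast,\R,\v)\le J(s^\ast,\R[][\ast],\v[][\ast])=J^\ast$. Combining, $g(s^\ast)=J^\ast$ while $g(s)\ge J^\ast=g(s^\ast)$ for all $s$, so the outer minimum is attained at $s^\ast$ and equals $J^\ast$. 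Therefore $\min_s\min_{\R,\v}J(s,\R,\v)=J^\ast=\min_{s,\R,\v}J(s,\R,\v)$, as claimed.

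I do not expect any real obstacle; this is a routine fact about iterated minimization. The single point requiring care is attainment: one must invoke compactness (of $S$, $\SO(3)$, and $V$) together with continuity of $J$ to replace every infimum by a minimum, which is the reason the problem was restricted to compact sets in \eqref{eq:GoldMine}. Notably, the argument never needs continuity of the marginal function $g$ itself, since the two inequalities already force the outer minimum to be attained at $s^\ast$.
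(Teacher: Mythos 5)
Your proposal is correct and follows essentially the same route as the paper: both introduce the marginal function $h(s)=\min_{\R,\v}J(s,\R,\v)$ and establish the equality via the two-sided inequality, with the upper bound obtained by evaluating $h$ at the scale $s^\ast$ of the joint minimizer. Your write-up is somewhat more explicit about why attainment holds (compactness of $S$, $\SO(3)$, $V$ plus continuity of $J$) and about the direction the paper dismisses as ``by definition,'' but these are elaborations of the same argument rather than a different one.
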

\textbf{Proof:} 
From \eqref{eq:GoldMine}, we have that
\begin{equation}
    J^\ast = J(s^\ast, \R[][\ast], \v[][\ast]).
\end{equation}
We further introduce
\begin{equation}
\begin{aligned}
h(s) &:= \min_{\R,\v}J(s,\R,\v)\\
h^\ast&:=\min_s h(s) 
\end{aligned}
\end{equation}
and note that by definition, we have
\begin{equation}
    \label{eq:JF}
    J^\ast \le h^\ast.
\end{equation}
To show $h^\ast \le J^\ast$, we use that minimizing values of $J$ exist and hence
\begin{equation}
    \label{eq:FJ}
    h^\ast \le h(s^\ast) := {\min_{\R,\v}J(s^\ast, \R,\v)} \le J(s^\ast, \R[][\ast], \v[][\ast]) = J^\ast.
\end{equation}
Equations \eqref{eq:JF} and \eqref{eq:FJ} conclude the equivalency of \eqref{eq:MinEqualMinmin}.\\
\qed

From Proposition~\ref{prop:MinToMinmin}, we can define the intrinsic calibration
problem as
\begin{align}
    \label{eq:minS}
    f(s) &:= \min_{\R, \v}J(s,\R,\v;\Xcal_t, \n[t],\p[0,t])\\
    J^\ast &= \min_{s}f(s).
\end{align}

\comment{
\begin{itemize}
    \item $J^\ast=\min_{s,\R,\v}J(s,\R,\v) \implies \exists s^\ast, R^\ast, t^\ast$ such that $$ J^\ast = J(s^\ast, R^\ast, t^\ast)$$ 
    \item $f(s) := {\rm inf}_{\R,\v}J(s, \R,\v)$
    \item $f^\ast={\rm inf}_{s}f(s) \implies f^\ast \le f(s)$ for all $s$
    \item By definition, $J^\ast \le f^\ast $
\item  To show: $f^\ast \le J^\ast$. Consider 
$$f^\ast \le f(s^\ast) := {\inf_{\R,\v}J(s^\ast, \R,\v)} \le J(s^\ast, R^\ast, t^\ast) = J^\ast$$
\end{itemize}

Due to the property of the Cartesian product, given a scaling
$s\in\reals^+$, \eqref{eq:minOverSim} can be written as
\begin{align}
    \label{eq:minS}
    f(s) = \min_{\R,\v}J(\R,\v;s,\Xcal_t, \n[t],\p[0,t]),
\end{align}
which has an efficient global solution~\cite{olsson2008solving, briales2017convex}.
Therefore, \eqref{eq:minOverSim} is a \mm problem:
\begin{equation}
    \label{eq:minmin}
    \min_{s}\min_{\R,\v}\sum_{t=1}^T J_t(s,\R,\v; \Xcal_t, \n[t],\p[0,t]).
\end{equation}
\begin{itemize}
    \item $J^\ast=\min_{s,\R,\v}J(s,\R,\v) \implies \exists s^\ast, R^\ast, t^\ast$ such that $$ J^\ast = J(s^\ast, R^\ast, t^\ast)$$ 
    \item $f(s) := {\rm inf}_{\R,\v}J(s, \R,\v)$
    \item $f^\ast={\rm inf}_{s}f(s) \implies f^\ast \le f(s)$ for all $s$
    \item By definition, $J^\ast \le f^\ast $
\item  To show: $f^\ast \le J^\ast$. Consider 
$$f^\ast \le f(s^\ast) := {\inf_{\R,\v}J(s^\ast, \R,\v)} \le J(s^\ast, R^\ast, t^\ast) = J^\ast$$
\end{itemize}
}

\squeezeup
\subsection{Determining the Calibration Parameters}
Because we can bound the scaling $s$ to a compact set, a minimizing value can be found by dense search. In the Appendix~C\cite{huang2020global}, we discuss a heuristic to speed up the algorithm if one feels the need.  


At the $(k+1)$-th iteration, the scaling parameter is determined and the remaining
parameters correspond to $\SE(3)$. To solve for them, we adopt techniques that
were used to solve 3D registration or 3D SLAM~\cite{tron2015inclusion,
carlone2015lagrangian, olsson2008solving, briales2017convex} where the problem is
formulated as a QCQP, and the Lagrangian dual relaxation is used. The relaxed
problem becomes a Semidefinite Program (SDP) and convex. The problem
can thus be solved globally and efficiently by off-the-shelf specialized
solvers~\cite{grant2014cvx}. This process is summarized in
Appendix~B in~\cite{huang2020global}. As shown in~\cite{briales2017convex}, the dual
relaxation is empirically always tight (the duality gap is zero). We also use the
same simulation and experimental data set in~\cite{olsson2008solving,
briales2017convex} to verify that the proposed algorithm is able to converge
globally, see Appendix~C in~\cite{huang2020global} for more visual results.

\begin{remark}
One may argue that the above is a standard optimization problem in that other solvers, such as Google ceres or g2o, can easily solve it. However, they are local solvers and will suffer if a good initial guess is not provided.
\end{remark}

\section{Simulation and Experimental Results}
\label{sec:SimulationResults}

This section first presents a simulation study of the intrinsic calibration problem
that will show the importance of Theorem~\ref{thm:uniqueness} for eliminating over
parameterization of the proposed model. Results here will then be used to inform the
experimental work. All experiments are conducted with a \velodyne, mounted on an
in-house designed torso for a Cassie-series bipedal
robot~\cite{CassieAutonomy2019ExtendedEdition}. All the optimizations related to the
proposed methods are solved via the proposed algorithm (see
Algorithm~\ref{alg:ConvexAlg}). As for the two baselines, we implemented them and
solved via the \texttt{fmincon} function in MATLAB. All the methods, simulator,
solver, and datasets are open-sourced:~\cite{githubFileIntrinsic,
githubFileLiDARSimulator, githubFileGlobalSim3Solver}. 

\begin{table*}[b]
    \caption{Validation data for various calibration methods on a \velodyneN. The numbers are the average P2P distance in meters. For each column, the \lidarN's calibration parameters are optimized on a common set of four targets arranged as a tetrahedron and validated on a common set of three targets arranged in different orientations and at different distances. What varies in each column is the level of systematic noise added to the raw data, as reflected in the P2P error reported in the row for Factory Calibration. Noise 0 is the raw signal from the factory-calibrated \lidarN, while Noise 1 to Noise 7 have increasing levels of systematic error. Our proposed method is insensitive to systematic error. }
\label{tab:Exp}
\center
\footnotesize
\scalebox{1.13}{
    \begin{tabular}{|c|c|c|c|c|c|c|c|c|c|}
\hline
 &
  Methods &
  Noise 0 &
  Noise 1 &
  Noise 2 &
  Noise 3 &
  Noise 4 &
  Noise 5 &
  Noise 6 &
  Noise 7 \\ \hline
\multirow{4}{*}{P2P} &
  Factory Calibration &
  0.0140 &
  0.0196 &
  0.0309 &
  0.0408 &
  0.0486 &
  0.0581 &
  0.0669 &
  0.0772 \\ \cline{2-10}
 &
  Baseline1 (3 parameters) &
  0.0052 &
  0.0054 &
  0.0064 &
  0.0085 &
  0.0111 &
  0.0135 &
  0.0154 &
  0.0164 \\ \cline{2-10}
 &
  Baseline2 (6 parameters) &
  \textbf{0.0042} &
  0.0059 &
  0.0070 &
  0.0100 &
  0.0121 &
  0.0148 &
  0.0165 &
  0.0181 \\ \cline{2-10}
 &
  Our Method &
  0.0047 &
  \textbf{0.0042} &
  \textbf{0.0040} &
  \textbf{0.0041} &
  \textbf{0.0043} &
  \textbf{0.0041} &
  \textbf{0.0039} &
  \textbf{0.0040} \\ \hline
\multirow{4}{*}{\begin{tabular}[c]{@{}c@{}}Point Clouds\\ Thickness\end{tabular}} &
  Factory Calibration &
  \multicolumn{1}{l|}{0.0626} &
  \multicolumn{1}{l|}{0.0777} &
  \multicolumn{1}{l|}{0.1182} &
  \multicolumn{1}{l|}{0.1579} &
  \multicolumn{1}{l|}{0.1981} &
  \multicolumn{1}{l|}{0.2379} &
  \multicolumn{1}{l|}{0.2788} &
  \multicolumn{1}{l|}{0.3211} \\ \cline{2-10}
 &
  Baseline1 (3 parameters) &
  \multicolumn{1}{l|}{\textbf{0.0439}} &
  \multicolumn{1}{l|}{\textbf{0.0463}} &
  \multicolumn{1}{l|}{0.0634} &
  \multicolumn{1}{l|}{0.0817} &
  \multicolumn{1}{l|}{0.0998} &
  \multicolumn{1}{l|}{0.1177} &
  \multicolumn{1}{l|}{0.1347} &
  \multicolumn{1}{l|}{0.1501} \\ \cline{2-10}
 &
  Baseline2 (6 parameters) &
  \multicolumn{1}{l|}{0.0457} &
  \multicolumn{1}{l|}{0.0535} &
  \multicolumn{1}{l|}{0.0719} &
  \multicolumn{1}{l|}{0.1005} &
  \multicolumn{1}{l|}{0.1110} &
  \multicolumn{1}{l|}{0.1308} &
  \multicolumn{1}{l|}{0.1440} &
  \multicolumn{1}{l|}{0.1694} \\ \cline{2-10}
 &
  Our Method &
  \multicolumn{1}{l|}{0.0476} &
  \multicolumn{1}{l|}{0.0475} &
  \multicolumn{1}{l|}{\textbf{0.0476}} &
  \multicolumn{1}{l|}{\textbf{0.0582}} &
  \multicolumn{1}{l|}{\textbf{0.0531}} &
  \multicolumn{1}{l|}{\textbf{0.0464}} &
  \multicolumn{1}{l|}{\textbf{0.0449}} &
  \multicolumn{1}{l|}{\textbf{0.0450}} \\ \hline
\end{tabular}
}
\end{table*}


\subsection{Simulated \lidar Environment}
\label{sec:Simulator}
Due to lack of ground truth, we built a \lidar simulator to compare our
calibration method against the two baselines and to illustrate the role of target
positioning. Our simulator can model \lidar sensors of different
working principles (spinning vs solid-state). With our simulator, it is easy to
control sources of uncertainty, including both mechanical model uncertainty and
measurement noise. Targets are assumed to be planar and polygonal.

To simulate 3D points on a planar target, we generate rays from the \lidar sensor,
and define a ``target'' point as the point at which the ray intersects the target.
The simulator has an option to account for shadowing or not. After locating the exact
\lidar returns on the target, based on the \lidar type, we then add two types of
uncertainty to the returns and report them as measured data. 

\begin{remark} 
        The simulator first finds all intersection points with the (infinite) plane
    defined by the target.  To determine if a point on the plane is within the
    boundary of the target polygon is a well-known problem in computer
    graphics, called the Point-In-Polygon (PIP) problem~\cite{moscato2019provably, el2020enhanced}. The Winding Number algorithm
    \cite{alciatore1995winding} is implemented in this simulator. If the winding
    number of a point is not zero, the point lies inside the boundary; otherwise it
    is outside.
\end{remark}

\begin{figure}[t]%
\centering
\includegraphics[width=1\columnwidth, trim={0cm 0cm 0cm 0cm},clip]{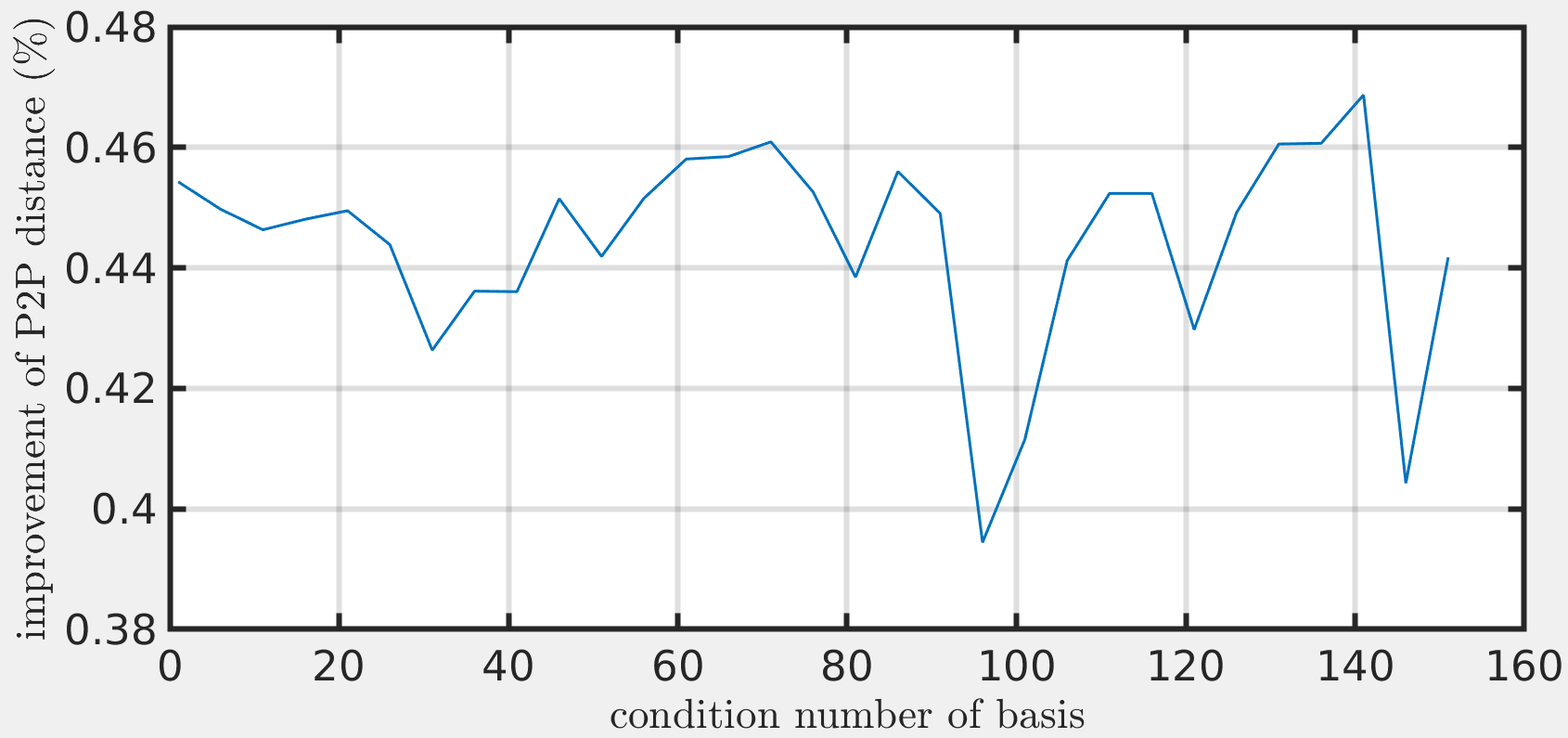}
\caption[]{P2P cost improvement is insensitive to condition number of the basis. The figure illustrates validation results for ten
thousand orientations of four targets arranged as a tetrahedron. Each orientation is a calibration scene from which a set of
calibration parameters is obtained. The calibration parameters are then
applied to another complex scene for validation. This figure shows that regardless of
the orientation of the tetrahedron, we are able to reduce the P2P cost by
44.7\% on average.
} 
\label{fig:OrientationVSP2P}%
\squeezeup
\end{figure}


\squeezeup
\subsection{Intrinsic Calibration of Spinning \lidars}
\label{sec:SimSpingHead}

\subsubsection{Illustration of Theorem~\ref{thm:uniqueness} in simulation} We randomly create 32 deterministic noise sequences (of length appropriate for each ring) and apply them to the ideal measurements of each ring in the \lidar simulator. As a calibration scene, we arrange four targets
around the LiDAR in the shape of a regular tetrahedron and then minimize the cost function in \eqref{eq:OptSolution}. For validation, we collect simulated data from a complex scene with 24 targets at various orientations and distances, using the
same uncalibrated \lidar corrupted by the same deterministic noise. We then applied the calibration parameters to the perturbed uncalibrated \lidar measurements and reduced the P2P distance of the validation scene by 45.43\% compared to the uncalibrated P2P distance.
Additionally, to study the influence of different 
condition numbers in Assumption~\ref{asm:assumtionB} 
on the calibration results, we rotate the tetrahedron with
various orientations (ten thousand different orientations in total, yielding ten thousand different bases).
We observe that the P2P cost improvement is insensitive to the condition number of the basis.
In Fig.~\ref{fig:OrientationVSP2P}, the simulation results show that by using the
tetrahedron (regardless of the orientation) to calibrate the \lidarN, 
the P2P distance was reduced by 44.7\% on average.

\subsubsection{Illustration of Theorem~\ref{thm:uniqueness} in experiments} 
Informed by the simulation study, two scenes (calibration and validation) of data from a \velodyne were collected using the
\lidart package~\cite{githubFileLiDARTag,huang2019lidartag}. In Fig.~\ref{fig:Training}, the calibration scene contains four targets\footnote{The targets were placed a distance of 0.7 meter so that most of the rings would lie on the targets.} arranged as an oriented tetrahedron as stated in Theorem~\ref{thm:uniqueness}. The targets'
normal vectors are estimated using the L1-inspired method
in~\cite{huang2020improvements}. 


In the raw data, the rings are mis-calibrated (by assumption). Because the target planes are estimated using 
data corresponding to mis-calibrated rings, the estimated normal vectors are inaccurate. We therefore propose a refinement of the normal vectors conditioned on the current
estimate of the similarity transformation $\Sim(3)$, that is, based on the current
estimate of the intrinsic parameters $(s, \R,\v)$. Once the normal vectors are updated,
new estimates for $(s, \R, \v)$ can be obtained, resulting in an alternating two-step
process.

Figure~\ref{fig:SystematicError} shows the experimental result of applying the calibration parameters to another set of targets (validation scene) placed at 0.7 meter. The proposed alternating method reduces the P2P distance by 68.6\% with respect to the factory-calibrated \lidarN.
Additionally, we induce
varying levels of systematic errors (from 1 cm to 7 cm, corresponding to level 1 to level 7) to the experiment data to
ensure the robustness of the proposed method,
as shown in Table~\ref{tab:Exp}. We observe that our proposed method is insensitive to systematic error. In addition, Table~\ref{tab:ScansVsPnP} shows the consistency of the proposed method as we accumulate different numbers of scans with different noise levels. It is seen that with higher levels of induced deterministic noise, the performance of the proposed method is invariant to the number of accumulated scans, while with lower levels of deterministic noise, the more scans included, the better is the performance. Both of these properties are desirable.


\begin{remark}
    There are two ways to orient the targets to form a tetrahedron: \textbf{1)} put the targets around the \lidar and keep the \lidar pose fixed; \textbf{2)} keep the target poses fixed and rotate the \lidar. This will allow the targets to form a tetrahedron even if a \lidar does not have a $360^\circ$ field of view.
\end{remark}
\begin{remark}
    To decide if the P2P distance of a ring to a target is random noise or
    systematic errors, we first compute the standard deviation of the \lidar returns
    on the target and consider the P2P distance greater than $3\sigma$ as
    systematic error. 
\end{remark}

\begin{figure}[t]%
\centering
\subfloat{%
\includegraphics[height=0.21\textheight, trim={0cm 0cm 0cm 0cm},clip]{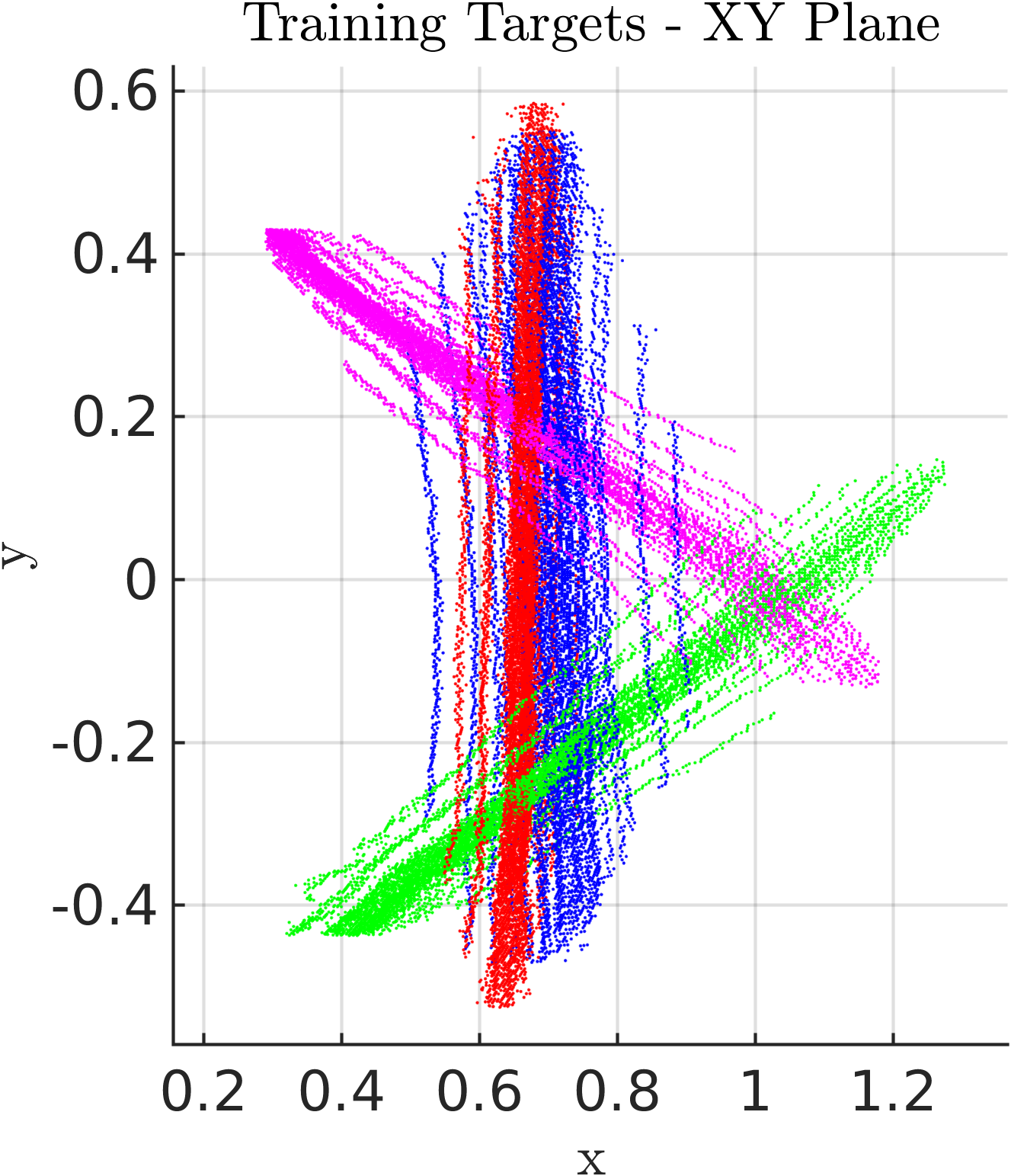}}~
\subfloat{%
\includegraphics[height=0.21\textheight, trim={0cm 0cm 1.5cm 0cm},clip]{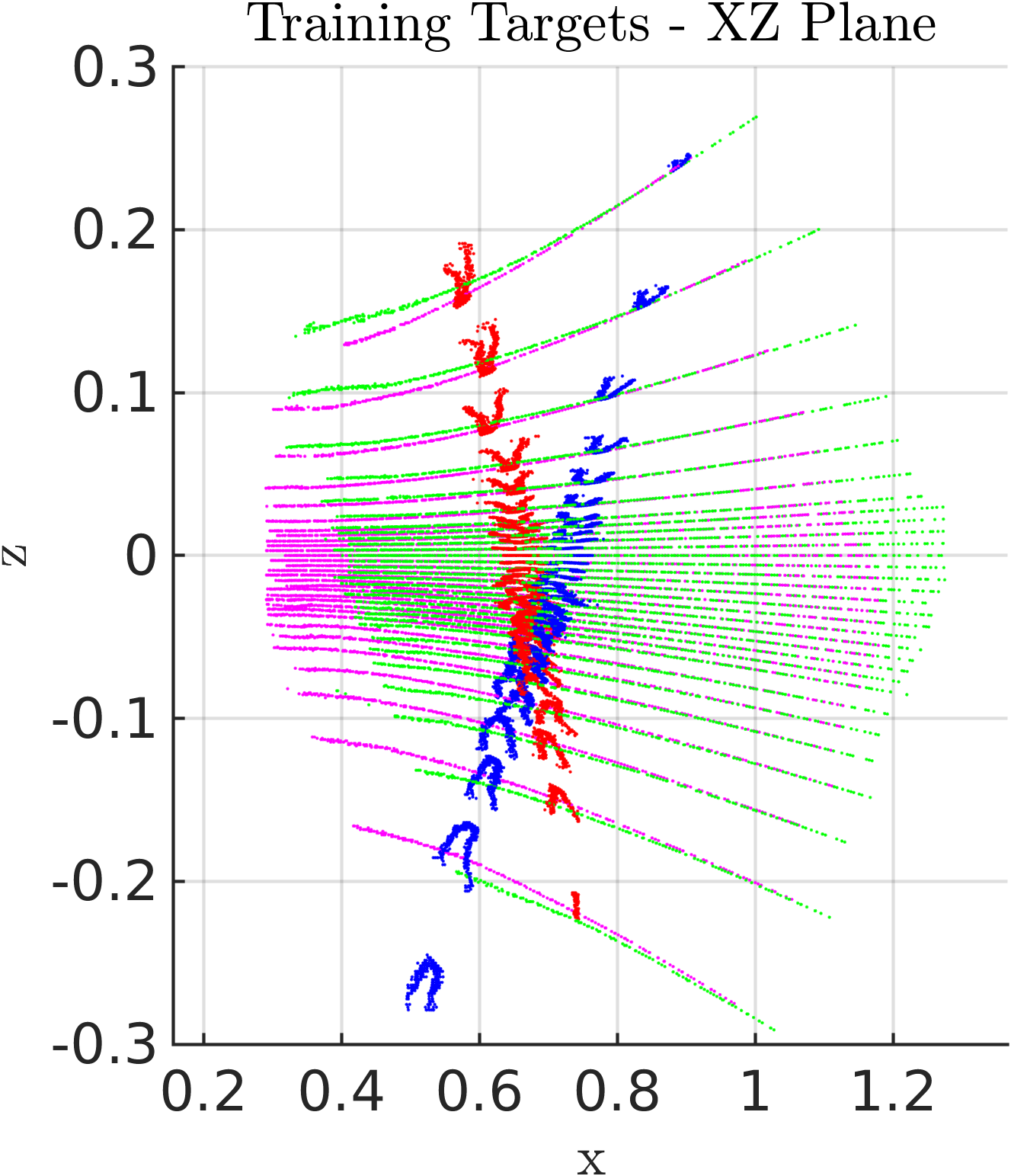}}
\caption[]{Four targets are placed to form a tetrahedron. The left shows the top-down view ($X$-$Y$ plane) of a single scan of the point cloud from the calibration scene and the right shows the side view ($X$-$Z$ plane) of the scene. Different colors stand for different targets.}
\label{fig:Training}%
\squeezeup
\end{figure}

\begin{figure}[t]%
\centering
\subfloat{%
\includegraphics[height=0.8\columnwidth, trim={0cm 0cm 4cm 0cm},clip]{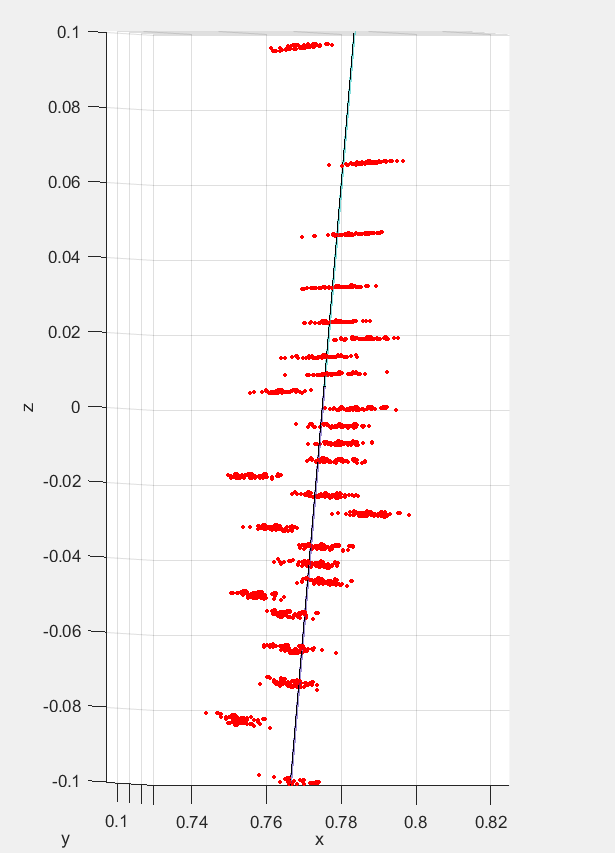}}
~
\subfloat{%
\includegraphics[height=0.8\columnwidth, trim={2cm 0cm 6cm 0cm},clip]{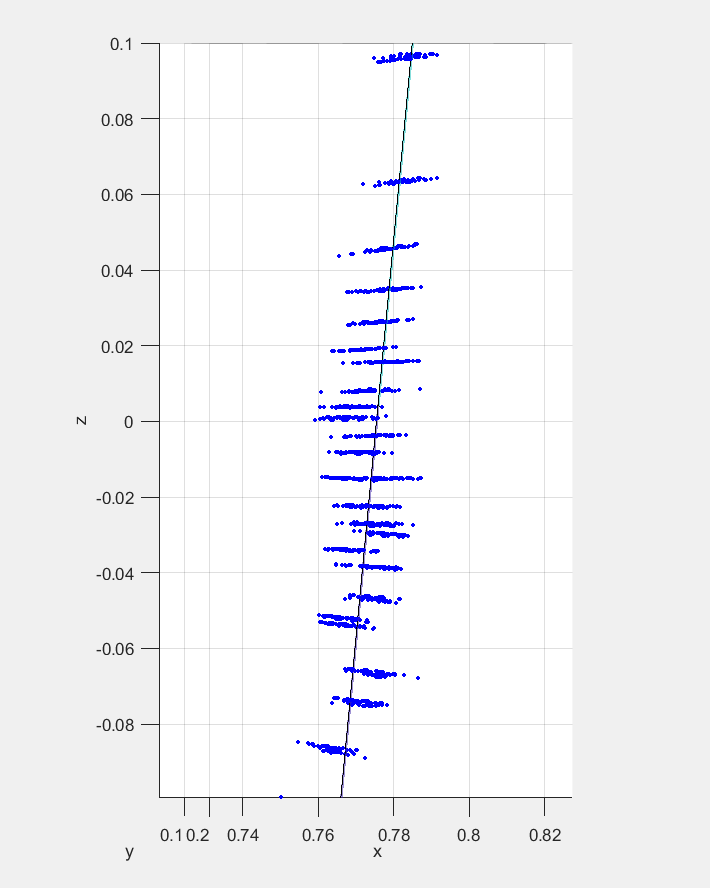}}~
\caption[]{On the left shows a factory-calibrated \velodyne measuring on a planer
target placed at 0.7 meter. The measurements are not consistent and lead to a thickness of 6.2 cm
point cloud with 1.4 cm P2P distance. 
On the right shows the calibration results of the calibration parameters globally optimized over the tetrahedron calibration scene in Fig.~\ref{fig:Training} and then the parameters of each ring are applied to this dataset as validation.
The thickness and P2P distance after calibration are 4.4 cm and
0.47 cm.}%
\label{fig:SystematicError}%
\squeezeup
\end{figure}

\comment{
All together, 39 targets were collected as potential training data. On a separate
day, an additional 12 targets were collected for validation. Table~\ref{tab:RealExp}
reports validation data for the three calibration methods on various subsets of the
targets based on number of targets: four, seven, 10 of them.

It is seen that the trends observed in the simulation data for a spinning \lidar
continue to hold in experiments. We conclude that the proposed idea of reformulating
the calibration problem in terms of shape-preserving transformations on the observed
point cloud has been supported (i.e., not invalidated). With the open-source release
of our code and data, it will be straightforward for other groups to confirm or deny
this finding.

Informed by
the simulation study, a set of \lidar data was collected and processed by the \lidart
package~\cite{githubFileLiDARTag,huang2019lidartag}.
we placed
four targets around our \velodyne.
}

\begin{table*}[]
\caption{Consistency of the proposed methods. The numbers are the average P2P distance in meters and improvement of average P2P distance in percentage. The columns show the numbers of accumulated scans. It is seen that with higher levels of induced deterministic noise, the performance of the proposed method is invariant to the number of accumulated scans, while with lower levels of deterministic noise, the more scans included, the better is the performance. Both of these properties are desirable.}
\label{tab:ScansVsPnP}
\scalebox{0.96}{
\begin{tabular}{|c|c|c|c|c|c|c|c|c|}
\hline
                                                                  & Noise Level        & 1 scan    & 2 scans   & 4 scans   & 6 scans   & 8 scans   & 10 scans  & 12 scans  \\ \hline
Original P2P                                                      & \multirow{3}{*}{0} & 0.013998  & 0.013998  & 0.014616  & 0.014593  & 0.0144    & 0.01439   & 0.01439   \\ \cline{1-1} \cline{3-9} 
Calibrated P2P                                                    &                    & 0.0047046 & 0.0047046 & 0.0040949 & 0.0040496 & 0.0040546 & 0.0040675 & 0.0040675 \\ \cline{1-1} \cline{3-9} 
\begin{tabular}[c]{@{}c@{}}Improvement [\%]\end{tabular} &                    & 0.6639    & 0.6639    & 0.71983   & 0.7225    & 0.71844   & 0.71735   & 0.71735   \\ \hline
Original P2P                                                      & \multirow{3}{*}{2} & 0.030888  & 0.030861  & 0.029893  & 0.02983   & 0.029812  & 0.031767  & 0.031783  \\ \cline{1-1} \cline{3-9} 
Calibrated P2P                                                    &                    & 0.0040332 & 0.0040739 & 0.0040189 & 0.0039899 & 0.0039737 & 0.0040665 & 0.0040417 \\ \cline{1-1} \cline{3-9} 
\begin{tabular}[c]{@{}c@{}}Improvement  [\%]\end{tabular} &  & 0.86942 & 0.86799 & 0.86556 & 0.86625 & 0.86671 & 0.87199 & 0.87284 \\ \hline
Original P2P                                                      & \multirow{3}{*}{4} & 0.048608  & 0.047194  & 0.048691  & 0.050308  & 0.050269  & 0.050251  & 0.050254  \\ \cline{1-1} \cline{3-9} 
Calibrated P2P                                                    &                    & 0.004301  & 0.0044295 & 0.004129  & 0.0040814 & 0.0040991 & 0.0040807 & 0.004067  \\ \cline{1-1} \cline{3-9} 
\begin{tabular}[c]{@{}c@{}}Improvement  [\%]\end{tabular} &                    & 0.91152   & 0.90614   & 0.9152    & 0.91887   & 0.91846   & 0.91879   & 0.91907   \\ \hline
Original P2P                                                      & \multirow{3}{*}{6} & 0.066834  & 0.066892  & 0.066869  & 0.06685   & 0.066823  & 0.066818  & 0.066838  \\ \cline{1-1} \cline{3-9} 
Calibrated P2P                                                    &                    & 0.0043101 & 0.0044868 & 0.0044289 & 0.0044279 & 0.0044585 & 0.0044699 & 0.0044777 \\ \cline{1-1} \cline{3-9} 
\begin{tabular}[c]{@{}c@{}}Improvement  [\%]\end{tabular} &                    & 0.93551   & 0.93293   & 0.93377   & 0.93376   & 0.93328   & 0.9331    & 0.93301   \\ \hline
\end{tabular}
}
\end{table*}

\comment{
We generate deterministic measurements by applying the following
perturbations to the data on a per-ring basis (i.e, values vary by ring number): (i)
simple model $(\drho, \dtheta, \dphi)$, (ii) complex model $(\drho, \dtheta, \dphi,
s, h, v)$, and (iii) $\Sim(3)$ model $(\R, \t, s)$, denoted as $\N[1], \N[2], \N[3]$,
respectively. The exact values used are in a \texttt{.mat} file uploaded to GitHub
\cite{githubFileIntrinsic}. For each target, its true normal vector $\n$ and point
$\p[0]$ on the target are assumed known so that the P2P cost functions can be
evaluated. For each of the three cost functions \eqref{eq:OptSolution},
\eqref{eq:SimpleCalibration}, \eqref{eq:ComplexCalibration}, the \texttt{fmincon}
function in MATLAB is used to determine calibration parameters achieving a local
minimum for each Scene. The initializations for \texttt{fmincon} are selected to
correspond to zero perturbation error, that is the parameters are zero, one or the
identify matrix as the case requires. The optimized calibration parameters are then
applied to the data of Scene 5 and the P2P cost of \eqref{eq:OptSolution} is
evaluated and reported as validation data in Table~\ref{tab:N3}.

The simulated results confirm that a single target results in degeneracy when
calibrating with the $\BL_2$ and $\Sim(3)$ models, and while $\BL_1$ works fine with
a single target, it drops in performance as the complexity of the calibration errors
is increased. The well-posedness of the $\BL_2$ and $\Sim(3)$ calibration models was
confirmed when the conditions of Theorem.~\ref{thm:uniqueness} are respected.

\addd{To further analyze the proposed method and verify Theorem~\ref{thm:uniqueness},
    we consider the following two scenarios: \textbf{1)} We generate \bh{10 thousand}
    training scenes in which we randomly place 4 targets at different
    distances with various angles. \textbf{2)} We generate 10 thousand training
    scenes in which we randomly rotate the tetrahedron, as shown in
    Fig.~\ref{fig:oriented-tetrahedron}. We compute the condition number of normal
    vectors (Assumption~\ref{asm:assumtionN}), and the condition number of bases
    (Assumption~\ref{asm:assumtionB}) for both scenarios. The optimized calibration
    parameters are then applied to the data of Scene 5 and the P2P cost of
\eqref{eq:OptSolution} is evaluated. The results are reported in
Fig.~\ref{fig:P2PResults}. Again, we can
see, as expected, when the condition number of normal vectors
worsens, the validation P2P cost increases. However, when the condition number of bases increases, the
P2P cost does not increase as we expected. Therefore, in practice, 
a set of well-conditioned target's normal vectors is sufficient to calibrate a
\slidarN.}
}

\begin{figure}[t]%
\centering
\subfloat{%
\label{fig:SSLiDARInitial}%
\includegraphics[height=0.23\textheight, trim={0cm 0cm 0cm 0cm},clip]{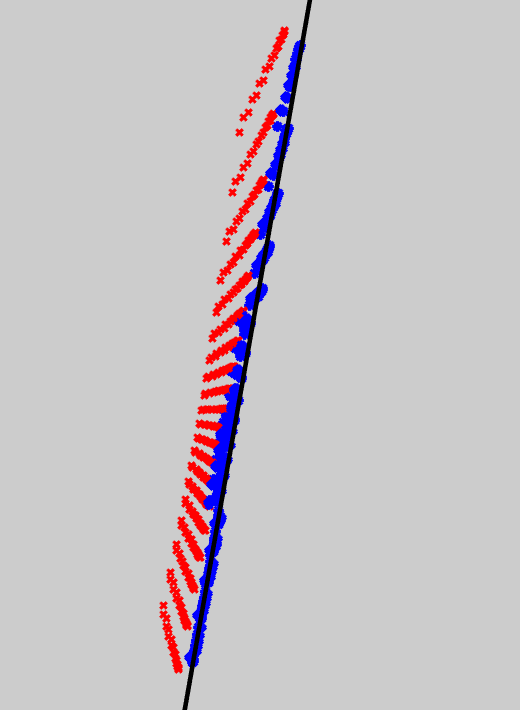}}~
\subfloat{%
    \label{fig:SSLiDARCalibrated}%
\includegraphics[height=0.23\textheight, trim={0cm 0cm 0cm 0cm},clip]{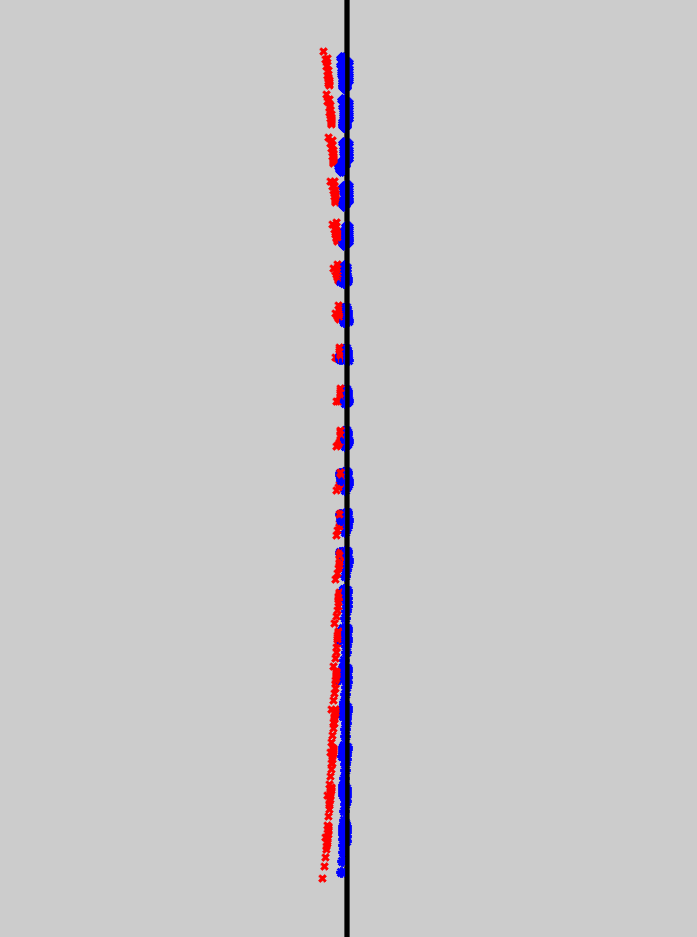}}
\caption[]{A \sslidar is (hypothetically) built on a warped wafer and measures a planar target (black). The un-calibrated measurements
and calibrated measurements are marked in red and blue, respectively. The left image shows a top-down view and the right image shows a side view of a target. The proposed method reduces the P2P distance by
48.7\%.}
\label{fig:SSLiDAR}
\squeezeup
\end{figure}

\squeezeup
\subsection{\lidar Simulated Intrinsic Calibration of Solid-State \lidars}
\label{sec:ExpSSLiDAR}
At the time of writing, we do not have access to a \sslidarN; therefore only simulation results are provided. However, we are open to collaborating if the reader can share data. OPA \sslidars are fabricated on a planar wafer with a large number of emitters, see
Sec.~\ref{sec:intro}. For such a \lidarN, we induce geometric uncertainty into a simulation of the system by assuming the
plane of the wafer is slightly warped. In the simulator, the OPA \sslidar has
$20\times20$ emitters with $160^\circ\times40^\circ$ (horizontal, vertical) field of view. We place four targets of the same size in the simulator.

Theorem~\ref{thm:uniqueness} requires a ring plane, as mentioned in Sec.~\ref{sec:TargetPlacementGuideline}. We therefore parse the \lidar returns into $20\times4$ uniform grids over the planar targets, with each grid containing five points. Each grid plays the role of a ring plane in Theorem~\ref{thm:uniqueness}, resulting in a total of 80 calibration models. We use Weighted Least Squares to estimate the plane of the target. The calibrated sensor is then validated on four other scenes that have different angles and distances. The mean P2P cost of \eqref{eq:OptSolution} improves by 48.7\%; the result is visualized in Fig.~\ref{fig:SSLiDAR}.

\begin{remark}
The \sslidar is used to measure the same target four times with different orientations. The four target planes form a tetrahedron. Therefore, the same five points can scan the four planes and result in a ring plane. Instead of five points, we can group a row/column of points, as mentioned in Sec.~\ref{sec:CollectionOfPoints}.
\end{remark}

\section{Conclusions}
\label{sec:Conclusions}
We proposed a universal method for LiDAR intrinsic calibration that abstracts away
the physics of a \lidar type (spinning head vs. solid-state, for example) and focuses
instead on the spatial geometry of the point cloud generated by the sensor. The
calibration parameter becomes an element of $\Sim(3)$, a matrix Lie group. We
mathematically prove that given four targets with appropriate orientations, the
proposed model is well-constrained (i.e., a unique answer exists). Because $\Sim(3)$
is closely related to $\SE(3)$, we showed how to profitably apply efficient, globally
convergent algorithms for $\SE(3)$ to determine a solution to our problem in
$\Sim(3)$. The resulting algorithm was evaluated in simulation for a solid-state
LiDAR and simulation and experiment for a spinning LiDAR. The P2P distance of the validation
scene for the \slidar was reduced 44.7\% and 68.6\% in simulation and
experiment, respectively. The P2P distance of the validation scene for the \sslidar
was reduced by 48.7\%. Both simulation and experiments showed that the proposed method
can serve as a generic model for intrinsic calibration of both spinning and
solid-state \lidarsN.

\comment{
and nicely generalized previous parametrization
models (from 3-parameter to 10-parameter models) into a unifying problem that can
solve both \slidars and \sslidarsN. We also mathematically proved that when solving
a transformation in $\Sim(3)$, a minimum of four appropriately orientated targets are
required to ensure a unique answer exists. The results of the proof provide a
guideline for target positioning where the four target should form a tetrahedron.
We introduced a global algorithm to find the unique answer of the intrinsic
calibration problem. The algorithm utilizes the bisection algorithm to solve the
scaling parameter and the rest of the problem is formulated as QCQP where Lagrangian
duality is used. The relaxed problem becomes an SDP and convex. Therefore, the
problem can be solved efficiently and globally.

\add{
We developed a novel LiDAR intrinsic calibration method that uses the spatial
geometry of the point cloud generated by the sensor. The calibration parameter was
modeled using $\Sim(3)$ Lie group and nicely generalized previous parametrization
models (from 3-parameter to 10-parameter models) into a unifying problem that can
solve both \slidars and \sslidarsN. We also mathematically proved that when solving
a transformation in $\Sim(3)$, a minimum of four appropriately orientated targets are
required to ensure a unique answer exists. The results of the proof provide a
guideline for target positioning where the four target should form a tetrahedron.
We introduced a global algorithm to find the unique answer of the intrinsic
calibration problem. The algorithm utilizes the bisection algorithm to solve the
scaling parameter and the rest of the problem is formulated as QCQP where Lagrangian
duality is used. The relaxed problem becomes an SDP and convex. Therefore, the
problem can be solved efficiently and globally.
}

\add{
We extensively validated this finding both in simulation and experiments. The P2P
distance of the validation scene for the \slidar was reduced \bh{44.7\%} and
\bh{68.6\%} in simulation and experiments, respectively. The P2P distance of the
validation scene for the \sslidar was reduced 48.7\%.
Both simulation and experiments showed that the proposed method can serve as a
generic model for intrinsic calibration of both spinning and solid-state \lidarsN. 
}
}

\section*{Acknowledgment}
Toyota Research Institute (TRI) provided funds to support this work. Funding for J. Grizzle was in part provided by TRI and in part by NSF
Award No.~1808051. The first author thanks Wonhui Kim for useful conversations. The
authors thank Carl Olsson for kindly providing the experimental datasets
in~\cite{olsson2008solving}.

\balance

\bibliographystyle{bib_/IEEEtran}
\bibliography{newbib/strings-abrv,newbib/ieee-abrv,newbib/BipedLab.bib, newbib/Books.bib, newbib/Bruce.bib, newbib/ComputerVision.bib, newbib/ComputerVisionNN.bib, newbib/IntrinsicCal.bib, newbib/L2C.bib, newbib/LibsNSoftwares.bib, newbib/ML.bib, newbib/OptimizationNMath.bib, newbib/Other.bib, newbib/StateEstimationSLAM.bib, newbib/ComputerGraphics.bib}

\begin{thebibliography}{10}
\providecommand{\url}[1]{#1}
\csname url@rmstyle\endcsname
\providecommand{\newblock}{\relax}
\providecommand{\bibinfo}[2]{#2}
\providecommand\BIBentrySTDinterwordspacing{\spaceskip=0pt\relax}
\providecommand\BIBentryALTinterwordstretchfactor{4}
\providecommand\BIBentryALTinterwordspacing{\spaceskip=\fontdimen2\font plus
\BIBentryALTinterwordstretchfactor\fontdimen3\font minus
  \fontdimen4\font\relax}
\providecommand\BIBforeignlanguage[2]{{%
\expandafter\ifx\csname l@#1\endcsname\relax
\typeout{** WARNING: IEEEtran.bst: No hyphenation pattern has been}%
\typeout{** loaded for the language `#1'. Using the pattern for}%
\typeout{** the default language instead.}%
\else
\language=\csname l@#1\endcsname
\fi
#2}}

\bibitem{zhang2000flexible}
Z.~Zhang, ``A flexible new technique for camera calibration,'' \emph{{IEEE}
  Trans. Pattern Anal. Mach. Intell.}, vol.~22, no.~11, pp. 1330--1334, 2000.

\bibitem{liebowitz1998metric}
D.~Liebowitz and A.~Zisserman, ``Metric rectification for perspective images of
  planes,'' in \emph{Proc. {IEEE} Conf. Comput. Vis. Pattern Recog.}\hskip 1em
  plus 0.5em minus 0.4em\relax IEEE, 1998, pp. 482--488.

\bibitem{stein1995accurate}
G.~P. Stein, ``Accurate internal camera calibration using rotation, with
  analysis of sources of error,'' in \emph{Proc. {IEEE} Int. Conf. Comput.
  Vis.}\hskip 1em plus 0.5em minus 0.4em\relax IEEE, 1995, pp. 230--236.

\bibitem{maye2013self}
J.~Maye, P.~Furgale, and R.~Siegwart, ``Self-supervised calibration for robotic
  systems,'' in \emph{2013 IEEE Intelligent Vehicles Symposium (IV)}.\hskip 1em
  plus 0.5em minus 0.4em\relax IEEE, 2013, pp. 473--480.

\bibitem{opencv}
Itseez, ``Open source computer vision library,''
  \url{https://github.com/itseez/opencv}, 2015.

\bibitem{ros}
M.~Quigley, K.~Conley, B.~Gerkey, J.~Faust, T.~Foote, J.~Leibs, R.~Wheeler, and
  A.~Y. Ng, ``{ROS}: an open-source {Robot Operating System},'' in \emph{ICRA
  workshop on open source software}, 2009.

\bibitem{mirzaei20123d}
F.~M. Mirzaei, D.~G. Kottas, and S.~I. Roumeliotis, ``{3D} {LiDAR}--camera
  intrinsic and extrinsic calibration: Identifiability and analytical
  least-squares-based initialization,'' \emph{Int. J. Robot. Res.}, vol.~31,
  no.~4, pp. 452--467, 2012.

\bibitem{pandey2010extrinsic}
G.~Pandey, J.~McBride, S.~Savarese, and R.~Eustice, ``Extrinsic calibration of
  a {3D} laser scanner and an omnidirectional camera,'' \emph{IFAC Proceedings
  Volumes}, vol.~43, no.~16, pp. 336--341, 2010.

\bibitem{glennie2010static}
C.~Glennie and D.~D. Lichti, ``Static calibration and analysis of the velodyne
  hdl-64e s2 for high accuracy mobile scanning,'' \emph{Remote sensing},
  vol.~2, no.~6, pp. 1610--1624, 2010.

\bibitem{nouira2016point}
H.~Nouira, J.-E. Deschaud, and F.~Goulette, ``Point cloud refinement with a
  target-free intrinsic calibration of a mobile multi-beam {LiDAR} system,''
  2016.

\bibitem{chan2013temporal}
T.~Chan, D.~D. Lichti, and D.~Belton, ``Temporal analysis and automatic
  calibration of the velodyne hdl-32e {LiDAR} system,'' \emph{ISPRS Ann.
  Photogramm. Remote Sens. Spat. Inf. Sci}, vol.~2, pp. 61--66, 2013.

\bibitem{atanacio2011lidar}
G.~Atanacio-Jim{\'e}nez, J.-J. Gonz{\'a}lez-Barbosa, J.~B. Hurtado-Ramos, F.~J.
  Ornelas-Rodr{\'\i}guez, H.~Jim{\'e}nez-Hern{\'a}ndez, T.~Garc{\'\i}a-Ramirez,
  and R.~Gonz{\'a}lez-Barbosa, ``{LiDAR} velodyne hdl-64e calibration using
  pattern planes,'' \emph{International Journal of Advanced Robotic Systems},
  vol.~8, no.~5, p.~59, 2011.

\bibitem{muhammad2010calibration}
N.~Muhammad and S.~Lacroix, ``Calibration of a rotating multi-beam {LIDAR},''
  in \emph{Proc. {IEEE}/{RSJ} Int. Conf. Intell. Robots and Syst.}\hskip 1em
  plus 0.5em minus 0.4em\relax IEEE, 2010, pp. 5648--5653.

\bibitem{bergelt2017improving}
R.~Bergelt, O.~Khan, and W.~Hardt, ``Improving the intrinsic calibration of a
  velodyne {LiDAR} sensor,'' in \emph{2017 IEEE SENSORS}.\hskip 1em plus 0.5em
  minus 0.4em\relax IEEE, 2017, pp. 1--3.

\bibitem{tron2015inclusion}
R.~Tron, D.~M. Rosen, and L.~Carlone, ``{On the inclusion of determinant
  constraints in Lagrangian duality for 3D SLAM},'' in \emph{Robotics: Science
  and Systems (RSS) in the workshop “The Problem of Mobile Sensors}, 2015.

\bibitem{carlone2015lagrangian}
L.~Carlone, D.~M. Rosen, G.~Calafiore, J.~J. Leonard, and F.~Dellaert,
  ``{Lagrangian duality in 3D SLAM: Verification techniques and optimal
  solutions},'' in \emph{2015 IEEE/RSJ International Conference on Intelligent
  Robots and Systems (IROS)}.\hskip 1em plus 0.5em minus 0.4em\relax IEEE,
  2015, pp. 125--132.

\bibitem{olsson2008solving}
C.~Olsson and A.~Eriksson, ``{Solving quadratically constrained geometrical
  problems using Lagrangian duality},'' in \emph{2008 19th International
  Conference on Pattern Recognition}.\hskip 1em plus 0.5em minus 0.4em\relax
  IEEE, 2008, pp. 1--5.

\bibitem{briales2017convex}
J.~Briales and J.~Gonzalez-Jimenez, ``{Convex global 3D registration with
  Lagrangian duality},'' in \emph{Proceedings of the IEEE Conference on
  Computer Vision and Pattern Recognition}, 2017, pp. 4960--4969.

\bibitem{grant2014cvx}
M.~Grant and S.~Boyd, ``{CVX: Matlab software for disciplined convex
  programming, version 2.1},'' 2014.

\bibitem{githubFileIntrinsic}
\BIBentryALTinterwordspacing
{J.K. Huang, C Feng, M. Achar, M. Ghaffari and Jessy W. Grizzle}, ``{Intrinsic
  LiDAR Calibration},'' 2019. [Online]. Available:
  \url{https://github.com/UMich-BipedLab/LiDAR_intrinsic_calibration}
\BIBentrySTDinterwordspacing

\bibitem{githubFileLiDARSimulator}
\BIBentryALTinterwordspacing
{J.K. Huang, C Feng and Jessy W. Grizzle}, ``{LiDAR Simulator Package},'' 2020.
  [Online]. Available: \url{https://github.com/UMich-BipedLab/lidar_simulator}
\BIBentrySTDinterwordspacing

\bibitem{githubFileGlobalSim3Solver}
\BIBentryALTinterwordspacing
{J.K. Huang, C Feng, M. Achar, M. Ghaffari and Jessy W. Grizzle}, ``{Global
  Sim3 Solver},'' 2020. [Online]. Available:
  \url{https://github.com/UMich-BipedLab/global_sim3_solver}
\BIBentrySTDinterwordspacing

\bibitem{frederiksen2020lidar}
A.~Frederiksen and S.~Hartmann, ``{LiDAR} device including at least one
  diffuser element,'' Feb.~6 2020, uS Patent App. 16/525,010.

\bibitem{poulton2019long}
C.~V. Poulton, M.~J. Byrd, P.~Russo, E.~Timurdogan, M.~Khandaker, D.~Vermeulen,
  and M.~R. Watts, ``Long-range {LiDAR} and free-space data communication with
  high-performance optical phased arrays,'' \emph{IEEE Journal of Selected
  Topics in Quantum Electronics}, vol.~25, no.~5, pp. 1--8, 2019.

\bibitem{poulton2018high}
C.~V. Poulton, P.~Russo, E.~Timurdogan, M.~Whitson, M.~J. Byrd, E.~Hosseini,
  B.~Moss, Z.~Su, D.~Vermeulen, and M.~R. Watts, ``High-performance integrated
  optical phased arrays for chip-scale beam steering and {LiDAR},'' in
  \emph{CLEO: Applications and Technology}.\hskip 1em plus 0.5em minus
  0.4em\relax Optical Society of America, 2018, pp. ATu3R--2.

\bibitem{yoo2018mems}
H.~W. Yoo, N.~Druml, D.~Brunner, C.~Schwarzl, T.~Thurner, M.~Hennecke, and
  G.~Schitter, ``Mems-based {LiDAR} for autonomous driving,'' \emph{e \& i
  Elektrotechnik und Informationstechnik}, vol. 135, no.~6, pp. 408--415, 2018.

\bibitem{pelz2020matrix}
G.~Pelz and N.~Elbel, ``Matrix light source and detector device for solid-state
  {LiDAR},'' Feb.~6 2020, uS Patent App. 16/052,862.

\bibitem{druml20181d}
N.~Druml, I.~Maksymova, T.~Thurner, D.~van Lierop, M.~Hennecke, and
  A.~Foroutan, ``1d mems micro-scanning {LiDAR},'' in \emph{Conference on
  Sensor Device Technologies and Applications (SENSORDEVICES)}, vol.~9, 2018.

\bibitem{lee2015optical}
X.~Lee and C.~Wang, ``Optical design for uniform scanning in mems-based {3D}
  imaging {LiDAR},'' \emph{Applied optics}, vol.~54, no.~9, pp. 2219--2223,
  2015.

\bibitem{wang2020mems}
D.~Wang, C.~Watkins, and H.~Xie, ``Mems mirrors for {LiDAR}: A review,''
  \emph{Micromachines}, vol.~11, no.~5, p. 456, 2020.

\bibitem{garcia2020geometric}
P.~Garc{\'\i}a-G{\'o}mez, S.~Royo, N.~Rodrigo, and J.~R. Casas, ``Geometric
  model and calibration method for a solid-state {LiDAR},'' \emph{Sensors},
  vol.~20, no.~10, p. 2898, 2020.

\bibitem{huang2019lidartag}
{Jiunn-Kai Huang, Shoutian Wang, Maani Ghaffari, and Jessy W. Grizzle},
  ``{LiDARTag: A Real-Time Fiducial Tag System for Point Clouds},'' \emph{arXiv
  preprint arXiv:1908.10349}, 2020.

\bibitem{huang2020improvements}
J.~{Huang} and J.~W. {Grizzle}, ``Improvements to target-based 3d lidar to
  camera calibration,'' \emph{IEEE Access}, vol.~8, pp. 134\,101--134\,110,
  2020.

\bibitem{huang2020global}
J.-K. Huang, C.~Feng, M.~Achar, M.~Ghaffari, and J.~W. Grizzle, ``Global
  unifying intrinsic calibration for spinning and solid-state lidars,''
  \emph{arXiv preprint arXiv:2012.03321}, 2020.

\bibitem{CassieAutonomy2019ExtendedEdition}
J.~Huang. (2019) {Cassie Blue walks Around the Wavefield}.
  \url{https://youtu.be/LhFC45jweFMc}.

\bibitem{moscato2019provably}
M.~M. Moscato, L.~Titolo, M.~A. Feli{\'u}, and C.~A. Mu{\~n}oz, ``Provably
  correct floating-point implementation of a point-in-polygon algorithm,'' in
  \emph{International Symposium on Formal Methods}.\hskip 1em plus 0.5em minus
  0.4em\relax Springer, 2019, pp. 21--37.

\bibitem{el2020enhanced}
M.~El-Salamony and A.~Guaily, ``Enhanced modified-polygon method for
  point-in-polygon problem,'' in \emph{Recent Advances in Engineering
  Mathematics and Physics}.\hskip 1em plus 0.5em minus 0.4em\relax Springer,
  2020, pp. 47--61.

\bibitem{alciatore1995winding}
D.~Alciatore and R.~Miranda, ``A winding number and point-in-polygon
  algorithm,'' \emph{Glaxo Virtual Anatomy Project Research Report, Department
  of Mechanical Engineering, Colorado State University}, 1995.

\bibitem{githubFileLiDARTag}
\BIBentryALTinterwordspacing
{Jiunn-Kai Huang, Shoutian Wang, Maani Ghaffari, and Jessy W. Grizzle},
  ``{LiDARTag ROS Package},'' 2020. [Online]. Available:
  \url{https://github.com/UMich-BipedLab/LiDARTag}
\BIBentrySTDinterwordspacing

\bibitem{brewer1978kronecker}
J.~Brewer, ``Kronecker products and matrix calculus in system theory,''
  \emph{IEEE Transactions on circuits and systems}, vol.~25, no.~9, pp.
  772--781, 1978.

\bibitem{NotesOnMatrix}
\BIBentryALTinterwordspacing
{Paul L. Fackler}, ``{Notes on Matrix Calculus},'' 2005. [Online]. Available:
  \url{https://media.gradebuddy.com/documents/1897145/1ad5e235-824d-4bbf-81d6-ffd2040e37ec.pdf}
\BIBentrySTDinterwordspacing

\end{thebibliography}

\begin{appendices}
\newpage
\setcounter{theorem}{0}
\setcounter{assumption}{0}
\setcounter{assumptionB}{1}
\setcounter{assumptionN}{13}
\section{Proof on Uniqueness of Similarity Transformation}
\label{sec:Uniqueness}
The proof provides a guideline to place targets such that an answer to the optimization
problem is unique. In particular, under Assumption~\ref{asm:assumtionN} and
Assumption~\ref{asm:assumtionB}, the optimization problem has
only one unique answer. As stated in Sec.~\ref{sec:IntrinsicCalibration}, the
intrinsic calibration parameters are modeled as an element of the similarity Lie
group ($\Sim(3)$). An element of this group in matrix from is
\begin{equation}
    \H = 
    \begin{bmatrix}
        s\R & \t \\
        \zeros & 1
    \end{bmatrix} \in \Sim(3),
\end{equation}
where $\R\in\SO(3), \t \in \reals^3$ and $s \in \reals^+$.

\subsection{Mathematical Definitions and Preliminaries}
Let $[\Scal]$ and $[\Scal]^\perp$ be the span and its orthogonal complement of
$\Scal\subset \realnumbers^3$, respectively\footnote{$\because \Scal \subset
\real^3$, $[\Scal]=\real^3 \iff [\Scal]^\perp =0$}. We denote the union of
intersection of $K$ spans as $\left( [\Scal_1] + [\Scal_2] + \cdots +
[\Scal_K]\right) = \cap_{i=1}^K[\Scal_i]$. Let $\P \subset \real^3$ be a plane.
$\P$ modeled the set traced out by a single ring of a perfectly calibrated
spinning \lidarN. Let $\{\e[1], \e[2], \e[3] \}$ denote the canonical basis for
$\real^3$. Without loss of generality, we assume $\P=[\e[3]]^\perp = \{\e[1],
\e[2]\}$. Therefore, $\forall s>0, \R\in\SO(3)$, 
$$ (s\R -\I)(\P)=0 \iff s=1, \R=\I.
\footnote{$(s\R-\I)(\P) = [(s\R-\I)\e[1]] + [(s\R-\I)\e[2])]$}$$

\subsection{Assumptions}
Consider four targets with unit normal vectors $\n[i]$ and let $\p[0, i]$ be a points
on the $i$-th target. 

\begin{assumptionN}[Normal Vectors]
    \label{eq:assumtionN}
    All sets of three distinct vectors from $\{ \n[1], \n[2], \n[3], \n[4], \e[3]\}$
    are \li.
\end{assumptionN}

For $1\le i \le 4$, the plane defined by the $i$-th target is $\V[i]:= \p[0,i] +
[\n[i]]^\perp.$ Under Assumption~\ref{asm:assumtionN}, some facts are listed below
without proofs:

\begin{enumerate}[(a)]
    \item For each $i \neq j \in \{1, 2, 3, 4 \}$, $\p[ij]:=\P \cap \V[i] \cap \V[j]$
        exists and is unique. There are $\binom{4}{2}$ intersection points.

    \item $\V[i] \cap \V[j]=\p[ij] + [\n[i], \n[j]]^\perp$.

    \item  $\H\cdot \p[ij] \in \V[i] \cap \V[j]$ if, and only if, $$(s \R-\I)\p[ij]
        +\t \in [\n[i], \n[j]]^\perp. $$

    \item Because $\dim [\n[i], \n[j]]^\perp=1$,  $(s \R-\I)\p[ij] +\t  \neq 0$ if,
        and only if, $ [(s \R-\I)\p[ij] +\t] = [\n[i], \n[j]]^\perp. $

    \item Let $\{\p[a], \p[b] \}$ be a basis for $\P$. Then $(s \R-\I)(\P)=0$ if,
        and only if, $(s \R-\I)(\p[a])=0$ and $(s \R-\I)(\p[b])=0$.
\end{enumerate}
\vspace{3cm}

\begin{assumptionB}[Basis Vectors]
    \label{eq:assumtionB}
    Given the six intersection points, any two form a set of basis for the ring
    plane. There are $\binom{6}{2}$ sets of basis. Subsets of the basis' are \li.
    \begin{enumerate}[(a)]
        \item $\{ \p[12], \p[13] \}$, $\{ \p[13], \p[14] \}$, $\{ \p[14], \p[12] \}$,
        \item $\{ \p[12], \p[23] \}$, $\{ \p[23], \p[24] \}$, $\{ \p[24], \p[12] \}$,
        \item $\{ \p[13], \p[23] \}$, $\{ \p[23], \p[34] \}$, $\{ \p[34], \p[13] \}$,
        \item $\{ \p[14], \p[24] \}$, $\{ \p[24], \p[34] \}$, $\{ \p[34], \p[14] \}$,
        \item $\{ \p[14], \p[23] \}$
    \end{enumerate}
\end{assumptionB}

\subsection{A Complete Proof}
\begin{theorem} 
    Assume that Assumptions~\ref{asm:assumtionN} and \ref{asm:assumtionB} hold and let $\H \in \Sim(3)$. If for each $i
    \neq j \in \{1, 2, 3, 4 \}$, $\H\cdot\p[ij] \in  \V[i] \cap
     \V[j]$, then 
     \begin{equation} 
         \H = 
         \begin{bmatrix}
            \I & \zeros \\
            \zeros & 1
         \end{bmatrix}.
     \end{equation}
 \end{theorem}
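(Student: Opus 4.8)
The plan is to reduce the hypothesis to a statement about the single linear map $M:=s\R-\I$ and the translation $\t$, and then to prove $M(\P)=\zeros$ and $\t=\zeros$ in that order. Setting $\mathbf{w}_{ij}:=M\p[ij]+\t$, facts (b)--(c) convert the assumption $\H\cdot\p[ij]\in\V[i]\cap\V[j]$ into the six memberships $\mathbf{w}_{ij}\in[\n[i],\n[j]]^\perp$, i.e.\ the scalar equations $\n[i][\transpose]\mathbf{w}_{ij}=0$ and $\n[j][\transpose]\mathbf{w}_{ij}=0$. Once $M(\P)=\zeros$ is in hand, the preliminary fact $(s\R-\I)(\P)=\zeros\iff s=1,\R=\I$ gives $M=\zeros$ for free, so the entire weight of the theorem rests on forcing $M$ to annihilate the ring plane.

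First I would make explicit the configuration the six points form. Each target plane meets the ring plane in a line $L_i:=\P\cap\V[i]$ --- a genuine line, because $\n[i]$ and $\e[3]$ are independent by Assumption~N --- and $\p[ij]=L_i\cap L_j$, so $\{\p[ij]\}$ are the six vertices of the complete quadrilateral cut out by $L_1,\dots,L_4$ inside $\P$. The job of Assumption~B is to pin down general position: each listed pair of vertices is a \emph{basis} of $\P$, hence two \emph{distinct} nonzero vectors, so no two lines coincide and no three are concurrent, and every triangle of vertices therefore spans $\P$. For each target I introduce the affine functional $g_i(\p):=\n[i][\transpose](M\p+\t)$. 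The equations $\n[i][\transpose]\mathbf{w}_{ij}=0$ for the three $j\neq i$ say precisely that $g_i$ vanishes at the three vertices lying on $L_i$; being affine and zero at more than one point of that line, $g_i\equiv 0$ on all of $L_i$.

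The crux is to upgrade ``$g_i$ vanishes on the line $L_i$'' to ``$M$ kills $\P$''. Restricted to $\P$, the functional $g_i$ is affine in two variables, so vanishing on $L_i$ leaves only two options: either its linear part on $\P$ is zero, or that linear part equals the in-plane normal of $L_i$, which I would verify is proportional to the orthogonal projection of $\n[i]$ onto $\P$. Both options are subsumed by the single statement that for each $i$ there is a scalar $\nu_i$ with $(s\R-\nu_i\I)(\P)\subseteq[\n[i]]^\perp$, the first option being the case $\nu_i=1$. A warning is in order: the purely linear incidence data are rank-deficient. The three equations attached to a given $i$ collapse to two (a value and a slope along $L_i$), so over the four targets only $4\times2=8$ independent scalar conditions bear on the twelve entries of $(M,\t)$, leaving a family of \emph{general} matrices $M$ of dimension at least $12-8=4$; the conclusion therefore cannot come from incidence alone. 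The indispensable extra ingredient is the rigidity of a scaled rotation: I would feed the relations $\n[i][\transpose](s\R\e[1]-\nu_i\e[1])=0$ and $\n[i][\transpose](s\R\e[2]-\nu_i\e[2])=0$, for the three targets with independent normals, into the fact that $s\R$ cannot fix a two-plane unless $s=1$ and $\R=\I$, aiming to force each $\nu_i=1$. Once that holds, $(s\R-\I)(\P)\subseteq[\n[1]]^\perp\cap[\n[2]]^\perp\cap[\n[3]]^\perp=\{\zeros\}$, so $M(\P)=\zeros$ and the preliminary fact delivers $s=1,\R=\I$.

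With $M=\zeros$ the remaining step is immediate: every constraint degenerates to $\t\in[\n[i],\n[j]]^\perp$, and since Assumption~N makes $[\n[1],\n[2]]^\perp$ and $[\n[1],\n[3]]^\perp$ two \emph{distinct} lines through the origin, their intersection is $\{\zeros\}$, forcing $\t=\zeros$; hence $\H$ is the identity. I expect the genuine difficulty to reside entirely in the previous paragraph --- separating the two per-target cases and showing that the scaled-rotation structure, together with the independence of three normals (Assumption~N) and the thirteen basis relations (Assumption~B), forbids every nontrivial $M$, in particular ruling out the case where $g_i$ is a nonzero functional vanishing exactly on $L_i$ for several $i$ simultaneously (equivalently, $\nu_i\neq1$). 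Converting those geometric ``vanishing on a line'' statements into column-wise annihilation of $\P$ is where I would spend the bulk of the effort.
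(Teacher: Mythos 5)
Your reduction of the hypothesis to the vanishing of the affine functionals $g_i(\p)=\n[i][\transpose]\bigl((s\R-\I)\p+\t\bigr)$ on the lines $L_i=\P\cap\V[i]$ is a faithful encoding of the problem, and your endgame is sound: once $(s\R-\I)(\P)=0$ one gets $s=1,\R=\I$ from the rigidity of scaled rotations, and then $\t\in[\n[1],\n[2]]^\perp\cap[\n[1],\n[3]]^\perp=\{0\}$. The difficulty is that the entire content of the theorem sits in the step you defer --- proving $(s\R-\I)(\P)=0$ --- and what you offer there is an aim, not an argument. Concretely: your per-target conclusion is only that there exists some scalar $\nu_i$ with $(s\R-\nu_i\I)(\P)\subseteq[\n[i]]^\perp$, and you give no mechanism for showing $\nu_i=1$, nor even that the four $\nu_i$ coincide. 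Worse, the proposed tool cannot finish the job on its own: even if all $\nu_i$ were shown equal to a common $\nu$, intersecting over three independent normals and invoking the two-plane-fixing fact yields only $\R=\I$ and $s=\nu$, which leaves precisely the one-parameter scaling degeneracy $\H=\bigl[\begin{smallmatrix}s\I & (1-s)\q[0]\\ 0&1\end{smallmatrix}\bigr]$ that the paper's Remark and Fig.~4 exhibit. Ruling that out requires the \emph{constant} terms of the $g_i$ (where the lines $L_i$ sit, not merely their directions), the fourth target, and the non-concurrency encoded in Assumption~B --- none of which actually enter your sketch of the crux, where Assumption~B is invoked by name only. Your own dimension count ($12-8=4$ residual degrees of freedom) correctly signals that something beyond linear incidence is needed, but the proposal never supplies it.

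For comparison, the paper closes this gap by exhaustion on $d:=\dim(s\R-\I)(\P)\in\{0,1,2\}$ and, contrary to where your sketch places the weight, uses the scaled-rotation structure only in the trivial case $d=0$. The cases $d=1$ and $d=2$ are eliminated by incidence geometry alone: for $d=1$ it splits on whether $\t\in(s\R-\I)(\P)$, using Assumption~N to force $[\n[1],\n[2]]^\perp+[\n[2],\n[3]]^\perp+[\n[3],\n[4]]^\perp=\reals^3$ in one branch and cycling through the twelve bases of Assumption~B(a)--(d) in the other to show $(s\R-\I)(\P)\perp\n[1],\n[2],\n[3]$; for $d=2$ it writes the common intersection point $-\t$ of the six lines $(s\R-\I)\p[ij]+[\n[i],\n[j]]^\perp$, derives compatibility equations in coefficients $c_1,\dots,c_4$, and shows the rank condition forces some $(s\R-\I)(\p[a]-\p[b])=0$ with $\{\p[a],\p[b]\}$ a basis of $\P$, contradicting $d=2$. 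You would need to either carry out an argument of comparable substance along your $\nu_i$ route --- in particular separating and excluding the case where several $g_i$ are nonzero functionals vanishing exactly on $L_i$ --- or adopt the paper's case split; as written, the proof is not there.
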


\begin{proof} 
The proof is by exhaustion on the dimension of $(s\R-\I)(\P)$. We have that
$\H\cdot\p[ij] = s\R \p[ij] + \t$, and therefore 
\begin{equation}
\label{eq:NecssSuff}
   \H\cdot\p[ij] \in  \V[i] \cap \V[j] \iff (s\R-\I) \p[ij] + \t \in [\n[i], \n[j]]^\perp. 
\end{equation}
 
\begin{itemize}

\item \textbf{Case 1:} $\dim (s\R-\I)(\P) =0$. \par
    Then $\t \in [\n[i],
\n[j]]^\perp$ for all $i\neq j$. Hence, $\t \in [\n[1], \n[2]]^\perp \cap
[\n[2], \n[3]]^\perp =0$, which implies that $\t=0$. Therefore, by
\textbf{Simple Fact 1}, we are done. We next show that $\dim (s\R-\I)(\P) > 0$
and $\H\cdot\p[ij] \in  \V[i] \cap \V[j]$ for all $i \ne j$ lead to
contradictions.

\item  \textbf{Case 2:} $\dim (s\R-\I)(\P) =1$ \par
    (a) Suppose $\t \not \in (s\R-\I)(\P)$. Then, for any $\p[ij]$, $(s\R-\I)\p[ij] +
    \t \neq 0$. Hence, 
    \begin{align*}
     [\n[1], \n[2]]^\perp &= [(s\R-\I)(\p[12]) + \t] \\
     [\n[2], \n[3]]^\perp &= [(s\R-\I)(\p[23]) +\t] \\
      [\n[3], \n[4]]^\perp &= [(s\R-\I)(\p[34]) +\t].
    \end{align*}
Because $[\n[1], \n[2]] \cap [\n[2], \n[3]]\cap [\n[3], \n[4]] =0$, we have
that $[\n[1], \n[2]]^\perp + [\n[2], \n[3]]^\perp +  [\n[3], \n[4]]^\perp =
\real^3$. We deduce that $\dim (s\R-\I)(\P) =2$, which is a contradiction. \\

(b) Suppose $\t \in (s\R-\I)(\P)$ and thus there exists $\p[t]\in \P$ such that
$\t=(s\R-\I)(\p[t])$. The condition \eqref{eq:NecssSuff} can therefore be written as
\begin{equation}
\label{eq:NecssSuffWithT}
   \H\cdot\p[ij] \in  \V[i] \cap \V[j] \iff (s\R-\I)( \p[ij] + \p[t]) \perp \{\n[i],\n[j] \}. 
\end{equation}

  Because  $\{ \p[12], \p[13] \}$,  $\{ \p[13], \p[14] \}$,  $\{ \p[14], \p[12] \}$
  are bases for $\P$, we conclude that 
  \begin{equation}
      [\p[12]+\p[t], \p[13]+\p[t], \p[14]+\p[t] ]=\P.
  \end{equation}
  Applying \eqref{eq:NecssSuffWithT} to this set of vectors, we deduce that
  $$(s\R-\I)(\P) \perp \n[1].$$ 
  Applying the same reasoning to $\{ \p[12], \p[23] \}$,
  $\{ \p[23], \p[24] \}$, $\{ \p[24], \p[12] \}$ , we deduce that $(s\R-\I)(\P) \perp
  \n[2]$. Repeating again, we have $(s\R-\I)(\P) \perp \n[3]$ and thus $\dim
  (s\R-\I)(\P)=0$.

\item  \textbf{Case 3:} $\dim (s\R-\I)(\P) =2$ \par
    We rewrite \eqref{eq:NecssSuff} as
    \begin{equation}
    \label{eq:NecssSuffv02}
       \H\cdot\p[ij] \in  \V[i] \cap \V[j] \iff -\t \in  (s\R-\I) \p[ij] +[\n[i], \n[j]]^\perp. 
    \end{equation}
Hence, if for all  $i \neq j \in \{1, 2, 3, 4 \}$, $\H\cdot\p[ij] \in  \V[i] \cap
\V[j]$, then the lines $(s\R-\I) \p[ij] +[\n[i], \n[j]]^\perp$ in $\real^3$ must have
a common point of intersection, namely $-\t$ and hence
\begin{equation} 
\label{eq:MasternIntersection}
\bigcap \limits_{i \neq j \in \{1, 2, 3, 4 \}} \{ (s\R-\I) \p[ij] + [\n[i], \n[j]]^\perp \} \neq \emptyset.
\end{equation}

\begin{remark}  
    When $(s\R-\I)(\P)=0$, the intersections in \eqref{eq:MasternIntersection} are
    non-empty; indeed, the equation reduces to 
    $$\bigcap \limits_{i \neq j \in \{1,
    2, 3, 4 \}} [\n[i], \n[j]]^\perp = \left(~ \sum \limits_{i \neq j} [\n[i], \n[j]]
    ~\right)^\perp = (\real^3)^\perp =0,$$
and hence $\t=0$.
\end{remark}

In the remainder of the proof, we show the intersection being non-empty
contradicts ${\rm dim}~(s\R-\I)(\P) = 2$. We do this by examining the
intersections in \eqref{eq:MasternIntersection} pairwise to arrive at a set of
necessary conditions for $\H \cdot \p[ij] \in  \V[i] \cap \V[j]$ for $i \neq
j$, and then use the necessary conditions to complete the proof. We note that for $ij
\neq kl$, 
  
\begin{align*}
 &\{ (s\R-\I) \p[ij] + [\n[i], \n[j]]^\perp \} \cap \{ (s\R-\I) \p[kl] +
[\n[k], \n[l]]^\perp \} \neq \emptyset \\
 &\iff (s\R-\I)(\p[ij] -  \p[kl]) \in
[\n[i], \n[j]]^\perp  + [\n[k], \n[l]]^\perp.
\end{align*}
 
 The indices are a bit easier to keep track of if we set 
 \begin{align}
 &\q[1]:=(s\R-\I)(\p[12]), ~\U[1]:=[\n[1], \n[2]]^\perp \nonumber\\
 &\q[2]:=(s\R-\I)(\p[13]), ~\U[2]:=[\n[1], \n[3]]^\perp \nonumber\\
 &\q[3]:=(s\R-\I)(\p[23]), ~\U[3]:=[\n[1], \n[4]]^\perp \nonumber\\
 &\q[4]:=(s\R-\I)(\p[14]), ~\U[4]:=[\n[2], \n[3]]^\perp \nonumber,
 \end{align}
 where $\{\U[k]|k=1,\cdots,4\}$ denote the indicated one-dimensional subspaces. Then,
 for each $i \neq j \in \{1, 2, 3, 4 \}$, $\U[i] \cap \U[j] =0$, and we have $\U[1]
 \oplus \U[2] \oplus \U[3]=\real^3$. Let $\u[i]$ be a basis for $\U[i]$, so that
 $\U[i] = [\u[i]]$, and write $$\u[4] =\alpha \u[1] + \beta \u[2] + \gamma \u[3].$$

\begin{claim} 
\label{claim:abg}
Each of the coefficients $\alpha, \beta, \gamma$ is non-zero.
\end{claim}
\begin{proof}  Suppose $\alpha=0$. Then $\U[4] \subset \U[2] + \U[3]$, that is,
$$[\n[2], \n[3]]^\perp  \subset [\n[1], \n[3]]^\perp +[\n[1], \n[4]]^\perp $$ But this is equivalent to
$$ [\n[1]] = [\n[1], \n[3]] \cap [\n[1], \n[4]]  \subset [\n[2], \n[3]], $$ and hence $\{
\n[1], \n[2], \n[3]\}$ is not linearly independent, contradicting Assumption~\ref{asm:assumtionN}. The same argument holds for the other coefficients. 
 \end{proof}
 
 \begin{claim}  A necessary condition for 
\begin{equation} 
\label{eq:MasternIntersection2}
\bigcap \limits_{i=1}^{4}  \{\q[i] + \U[i] \} \neq \emptyset
\end{equation}
is that there exist real numbers $c_1, c_2, c_3, c_4$ such that
\begin{equation}
\label{eq:SimulEqns}
\begin{aligned}
    \Delta \q[12]:=\q[1]-\q[2]&=c_1 \u[1] + c_2\u[2] \\
    \Delta \q[13]:=\q[1]-\q[3]&=c_1 \u[1] + c_3\u[3] \\
    \Delta \q[14]:=\q[1]-\q[4]&=c_1 \u[1] + c_4 \u[4] \\
    \Delta \q[23]:=\q[2]-\q[3]&=c_3 \u[3] - c_2\u[2] \\
    \Delta \q[24]:=\q[2]-\q[4]&=c_4 \u[4] - c_2\u[2] \\
    \Delta \q[34]:=\q[3]-\q[4]&=c_3 \u[3] - c_4\u[4] \\
\end{aligned}
\end{equation}
\end{claim}

\begin{proof}
Each row of \eqref{eq:SimulEqns} corresponds to a condition of the form $\q[i]-\q[j] \in \U[i] \oplus \U[j]$. 
The proof proceeds by expressing each of the six rows in \eqref{eq:SimulEqns} with distinct coefficients (12 in total), and then writing down three necessary compatibility conditions, 
\begin{equation}
\label{eq:CompatibilityEqns}
\begin{aligned}
\Delta \q[12]-\Delta \q[13]+\Delta \q[23]&=0 \\
\Delta \q[12]-\Delta \q[14]+\Delta \q[24]&=0\\
\Delta \q[14]-\Delta \q[13]+\Delta \q[34]&=0.
\end{aligned}
\end{equation}
Because $-\t$ is in the intersection of the lines in \eqref{eq:MasternIntersection2},
the resulting linear equations must have a solution, and indeed direct computation
shows that the set of solutions can be parameterized as given in the claim.
\end{proof}

The next step is to note that $[\Delta \q[12], \cdots, \Delta \q[34] ] \subset
(s\R-\I)(\P)$, and hence its dimension must be less than three. Additional
straightforward calculations show that
$$\dim ~[\Delta \q[12], \Delta \q[13], \Delta \q[34] ] =  {\rm rank~} 
\begin{bmatrix}
    c_1 & c_1 & -\alpha c_4 \\ c_2 & 0 & c_2 - \beta c_4 \\ 0 & c_3 &-\gamma c_4
\end{bmatrix}.
$$

In light of Claim~\ref{claim:abg}, the rank is less than three if, and only if, any
two coefficients of $\{ c_1, c_2, c_3, c_4\}$ are zero. But if this is the case, then
at least one row of \eqref{eq:SimulEqns} must be zero. Each row of
\eqref{eq:SimulEqns}, however, has the form $(s\R-\I)(\p[a] - \p[b])$, where
$\{\p[a], \p[b] \}$ is a basis for $\P$, and thus it cannot be the case that $\dim
(s\R-\I)(\P) = 2.$ This completes the proof. 
\end{itemize}
\end{proof}

\newpage

\section{Lagrangian Duality Relaxation \\for P2P Distance on $\SE(3)$}
\label{sec:DualFormulation}
At the $k$-th iteration, the scaling parameter is determined (see
Sec.~\ref{sec:ConvexRelaxation}) and the rest of the parameters are $\SE(3)$. To
solve the remaining parameters, we adopt techniques that were used to solve 3D
registration or 3D SLAM~\cite{tron2015inclusion, carlone2015lagrangian,
olsson2008solving, briales2017convex}. We summarize below for completeness.
The action of $\SE(3)$ on $\reals^3$ can be rewritten as:
\begin{equation}
    \H\cdot\x = \R\x+\v =
    \begin{bmatrix}
        \x[][\top] \otimes I_3 & I_3
    \end{bmatrix}
    \underbrace{\begin{bmatrix}
    \vect{\R} \\\v
    \end{bmatrix}}_{\tau},
\end{equation}
where $\otimes$ and $\vect{\cdot}$ are the Kronecker
product~\cite{brewer1978kronecker} and the vectorization
operation~\cite{NotesOnMatrix}, respectively. 

\subsection{P2P Distance Reformulation and Quadratic Formulation}
The P2P distance~\eqref{eq:P2PDistance} can be equivalently reformulated into a
quadratic form:
\begin{align}
    \label{eq:QuadraticForm1}
    \sum_{i=1}^M J_i &:= \sum_{i=1}^M|(\H\cdot \x[i]-\p[0,t])^\top (\n[t]\n[t][\top]) (\H\cdot\x[i]\p[0,t])|\nonumber\\
                     &= \sum_{i=1}^M\begin{bmatrix}\tau\\1\end{bmatrix}^\top N_i^\top
                     (\n[t]\n[t][\top]) N_i \begin{bmatrix}\tau\\1\end{bmatrix} =
                 \tilde{\tau}^\top W_t \tilde{\tau},
\end{align}
where $N_i = \begin{bmatrix} \x[i]\otimes I_3|-\p[0,t]\end{bmatrix}$ and
$W_t=\sum_{i}^{M}N_i^\top(\n[t]\n[t][\top])N_i$.  After rearranging
\eqref{eq:QuadraticForm1}, the resulting problem in \eqref{eq:OptSolution} becomes
\begin{align}
    \label{eq:QuadraticForm2}
    &\min_{H\in\SE(3)} \sum_{t=1}^T
    \begin{bmatrix}
        \vect{\R}\\1\\\v
\end{bmatrix}^\top
\tilde{W}_t^\prime
\begin{bmatrix}
    \vect{\R}\\1\\\v
\end{bmatrix}\nonumber\\
&= \min_{H\in\SE(3)} \sum_{t=1}^T
    \begin{bmatrix}
        \tilde{\r}\\\v
\end{bmatrix}^\top
\begin{bmatrix}
    \tilde{W}_{\tilde{\r},\tilde{\r}} & \tilde{W}_{\tilde{\r},\v}\\
    \tilde{W}_{\v,\tilde{\r}} & W_{\v,\v}
\end{bmatrix}_t
    \begin{bmatrix}
        \tilde{\r}\\\v
\end{bmatrix}\nonumber\\
&=\min_{H\in\SE(3)}\tilde{\r}^\top\tilde{W}_{\tilde{\r},\tilde{\r}}\tilde{\r} + 2\v[][\top]\tilde{W}_{\v,\tilde{\r}}\tilde{\r}+\v[][\top] W_{\v,\v} \v
\end{align}
where $\tilde{\r} = \begin{bmatrix}\vect{\R}^\top&1\end{bmatrix}^\top$. We then introduce the Lagrangian
multipliers. Due to $R\in\SO(3)$ constraints, the derivative with respect to $\v$ is
zero: $\partial \mathrm{L}(H, \lambda)/\partial \v=0$, which leads to
\begin{equation}
\label{eq:optimalT}
\v[][*] = -\inv{(W_{\v,\v})}\tilde{W}_{\v,\tilde{\r}}\tilde{\r}.
\end{equation}
By substituting \eqref{eq:optimalT} into \eqref{eq:QuadraticForm2}, we have
\begin{equation}
    \label{eq:QuadraticForm3}
    \min_{\R\in\SO(3)} \tilde{\r}^\top \tilde{\Q} \tilde{\r},
\end{equation}
where $\tilde{\Q}$ is the Schur complement of $\tilde{W}_t$ and equal to
$\tilde{W}_{\tilde{\r},\tilde{\r}}-\tilde{W}_{\tilde{\r},\v}W^{-1}_{\v,\v}\tilde{W}_{\v,\tilde{\r}}$.

\subsection{Primal Problem and Its Dual}
From~\cite{tron2015inclusion,carlone2015lagrangian,briales2017convex}, we re-define
\eqref{eq:QuadraticForm3} to an \textit{equivalent, homogeneous, strengthened} primal
problem:
\begin{align}
    \label{eq:Primal}
    &\min_{R}f(\tilde{\q}) = \min_{R} \tilde{\q}^\top \tilde{\Q}\tilde{\q},~\tilde{\q}=\begin{bmatrix}\vect{\R}^\top
    & y\end{bmatrix}^\top\\
    \text{s.t.~~}&R^\top R=y^2 I_3\nonumber\\
    &RR^\top=y^2I_3\nonumber\\
    &R^{(i)}\times R^{(j)}=yR^{(k)},~i,j,k=\text{permute}\{1,2,3\}\nonumber\\
    &y^2=1\nonumber.
\end{align}
The primal problem \eqref{eq:Primal} is a QCQP and the corresponding dual
problem is defined as
\begin{equation}
    \mathrm{L}(\tilde{\q}, \boldsymbol \lambda) = \gamma + \tilde{\q}^\top (\tilde{\Q}
    + \tilde{\P}(\boldsymbol \lambda))\tilde{\q} = \gamma + \tilde{\q}^\top \Z\tilde{\q},
\end{equation}
where $\P$ is the penalization matrix~\cite{briales2017convex}. The Lagrangian
relaxation is an unconstrained problem and has a closed-form solution:
\begin{align}
    \label{eq:Dual}
    g(\boldsymbol \lambda)  &= \min_{\tilde{\q}}\mathrm{L}(\tilde{\q}, \boldsymbol
    \lambda) = \min_{\tilde{\q}} \gamma + \tilde{\q}^\top \Z\tilde{\q}\\
                            &= \begin{cases}
                                \gamma, \text{~if~} \Z \succeq 0\\
                                -\infty, \text{~otherwise}.
                            \end{cases}
\end{align}
Therefore, the maximization of the dual problem \eqref{eq:Dual} is a SDP:
\begin{equation}
\label{eq:Dual2}
    g^* = \max_{\boldsymbol \lambda} \gamma, \text{~s.t.~} \Z(\boldsymbol \lambda)\succeq 0
\end{equation}
This problem is convex and can be solved globally by off-the-shelf specialized
solvers~\cite{grant2014cvx}. It is shown in \cite{briales2017convex} that the
relaxation is empirically always tight (the duality gap is zero).

\begin{figure*}[b]%
\centering
\subfloat{%
    \label{fig:RandomBefore}%
\includegraphics[width=0.3\textwidth, trim={0cm 0cm 0cm 0.1cm},clip]{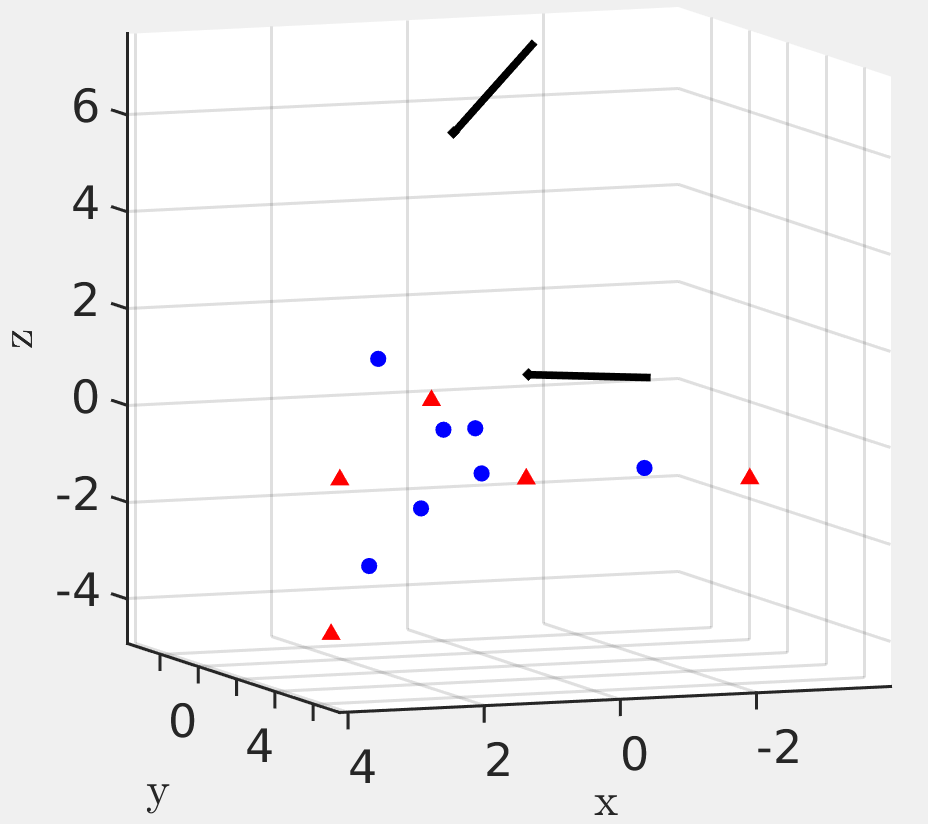}}~
\subfloat{%
\label{fig:RubikCubeBefore}%
\includegraphics[width=0.3\textwidth, trim={0cm 0cm 0cm 0cm},clip]{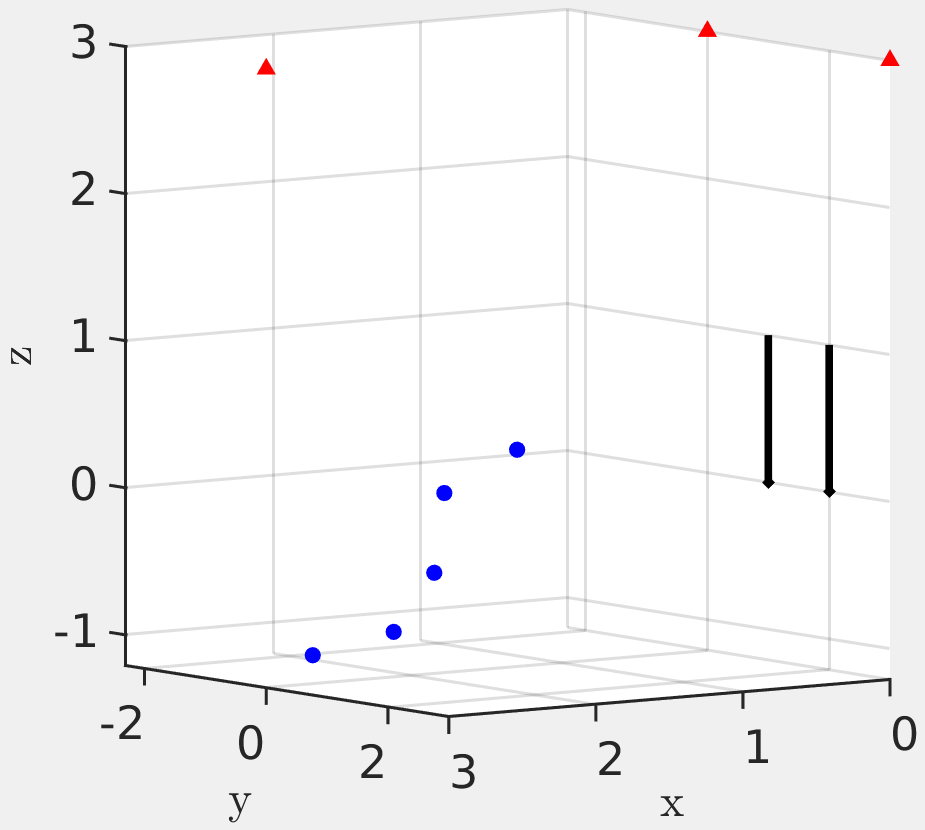}}~
\subfloat{%
    \label{fig:SpaceStationBefore}%
\includegraphics[width=0.3\textwidth, trim={0cm 0cm 0cm 0cm},clip]{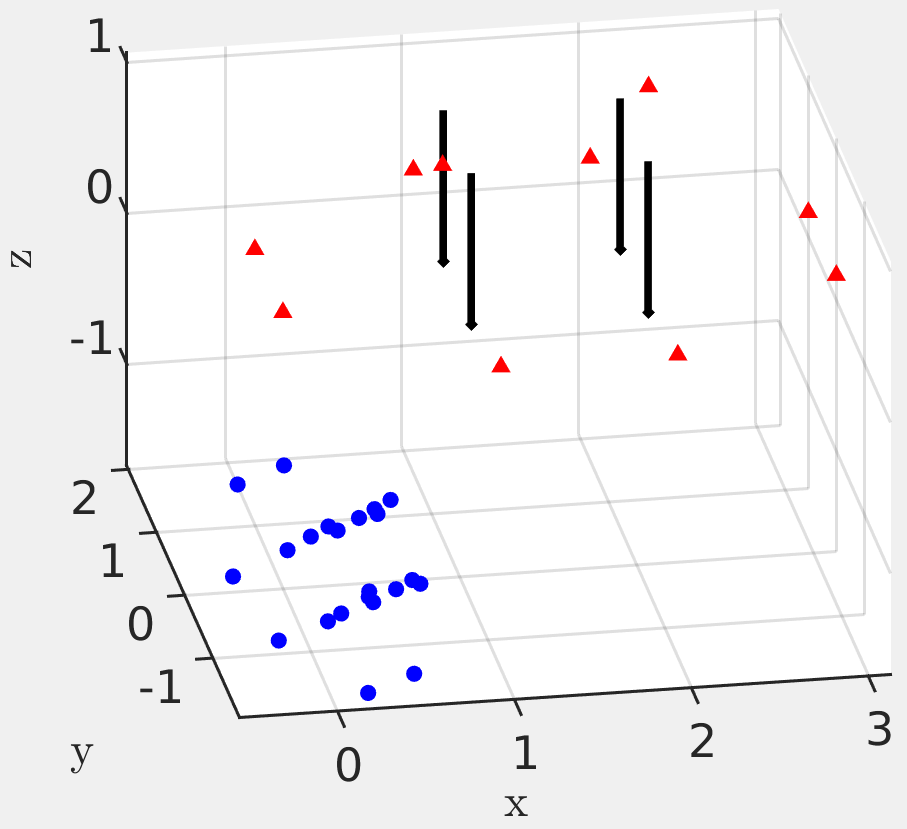}}\\
\subfloat{%
    \label{fig:RandomAfter}%
\includegraphics[width=0.3\textwidth, trim={0cm 0cm 0cm 0cm},clip]{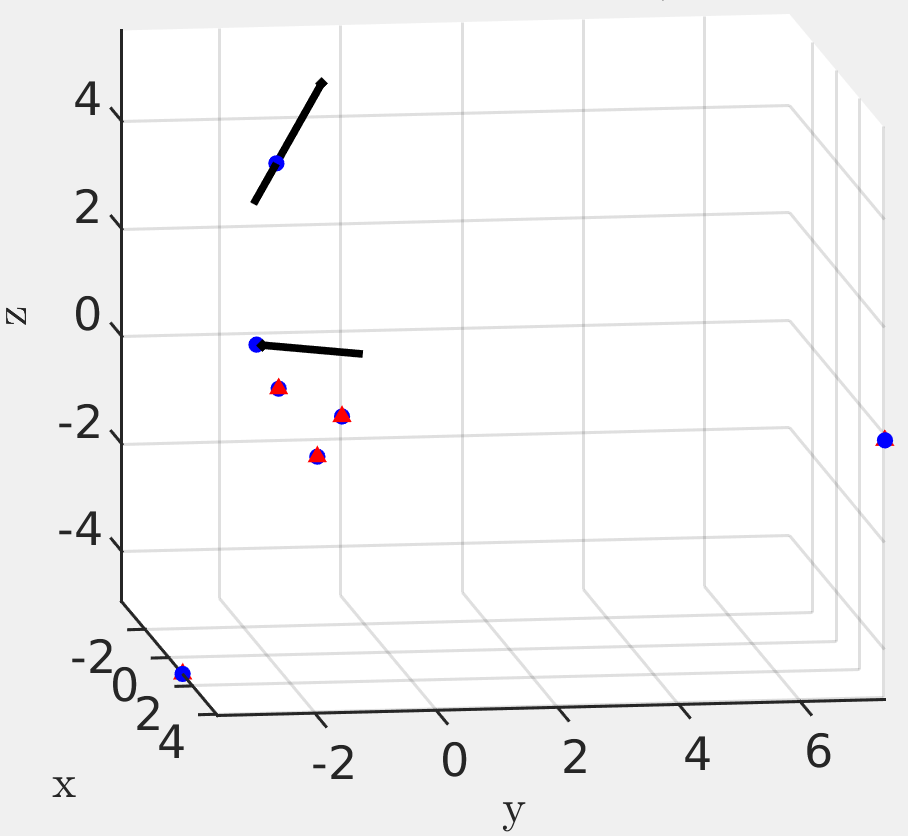}}~
\subfloat{%
    \label{fig:RubikCubeAfter}%
\includegraphics[width=0.3\textwidth, trim={0cm 0cm 0cm 0cm},clip]{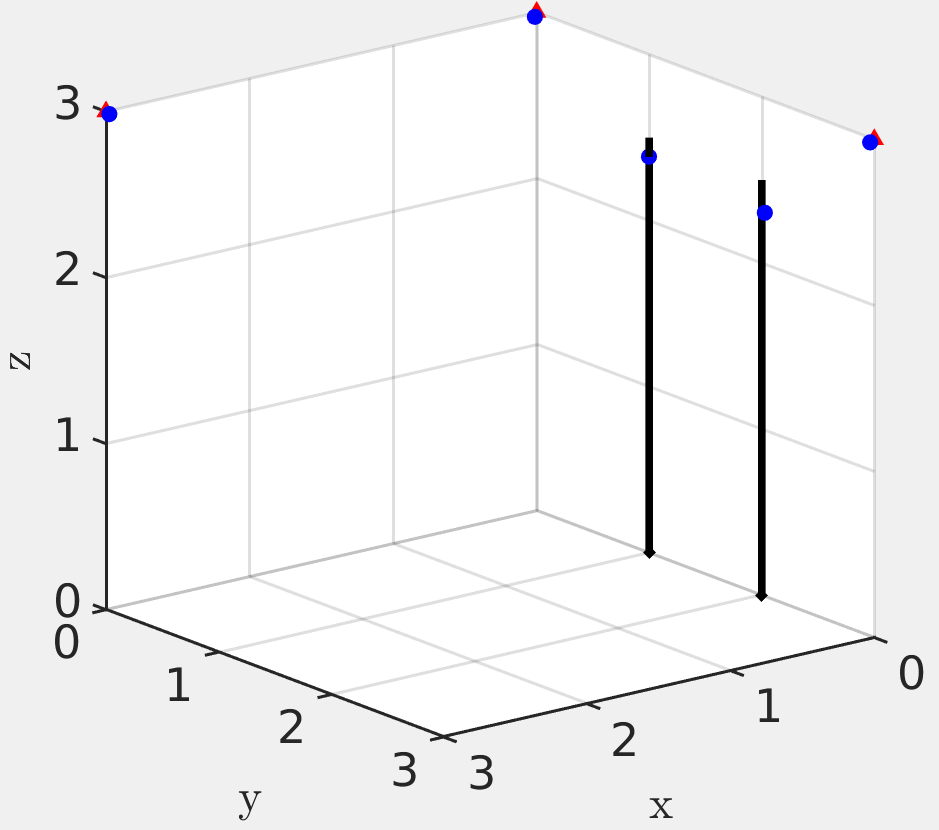}}~
\subfloat{%
    \label{fig:SpaceStationAfter}%
\includegraphics[width=0.3\textwidth, trim={0cm 0cm 0cm 0.2cm},clip]{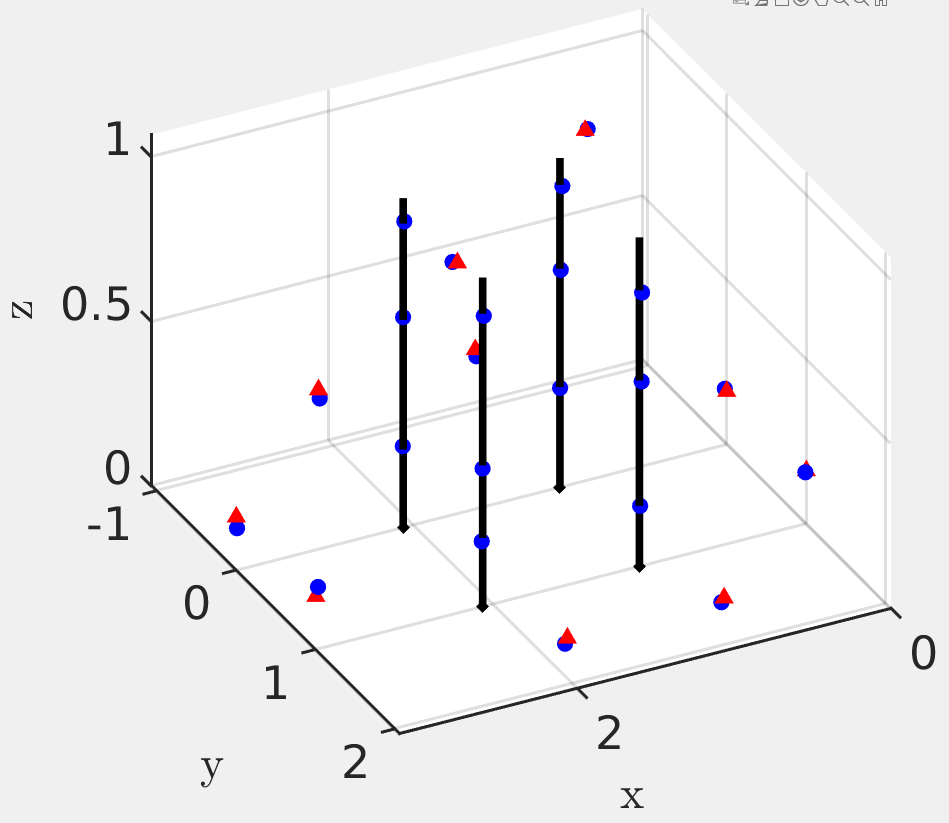}}
\caption[]{This figure shows the proposed method can be used in 3D registration
problems. The correspondences are point-to-point (marked in blue and red),
point-to-line (marked in blue and black), and point-to-plane (not shown to keep
the figure readable). The top and bottom row is the initial status
and the results of the registration problems, respectively. The simulation (Random)
with five point-to-point and two point-to-line correspondences are shown in the left
column. The result of the experimental data (RubikCube) with three point-to-point and
two point-to-plane correspondences is shown in the second column. The last column
shows the result of the experimental data (SpaceStation) with ten point-to-point and 12
point-to-line correspondences.
}
\label{fig:Resgistration}
\end{figure*}

\newpage


\section{Observations on Global Convergence and Convexity}
\label{sec:ConvexityFigures}
In this section, we provide numerical results that suggest the proposed method is also
applicable to 3D registration problems and may achieve global convergence. Additionally,
we show more figures about the potential convexity of \eqref{eq:minS}.


\subsection{Toward Global Convergence of the Proposed Method}
We show that the proposed method in the experimental data collected by the \velodyne 
reduces the P2P distance by 68.6\%, as shown in Fig.~\ref{fig:SystematicError}.
In addition, we illustrate the proposed algorithm can be used in 3D
registration problems (point-to-point, point-to-line, point-to-plane) by scaling the
simulation (named Random) and experimental data (named RubikCube and SpaceStation) in
\cite{olsson2008solving, briales2017convex}. The results of the registration problems
are shown in Fig.~\ref{fig:Resgistration} and the experiment setup from
\cite{olsson2008solving} is shown in Fig.~\ref{fig:SpaceStationExpSetup}.

\begin{figure}[b]%
\centering
\includegraphics[width=0.81\columnwidth, trim={0.0cm 0cm 0cm 0cm},clip]{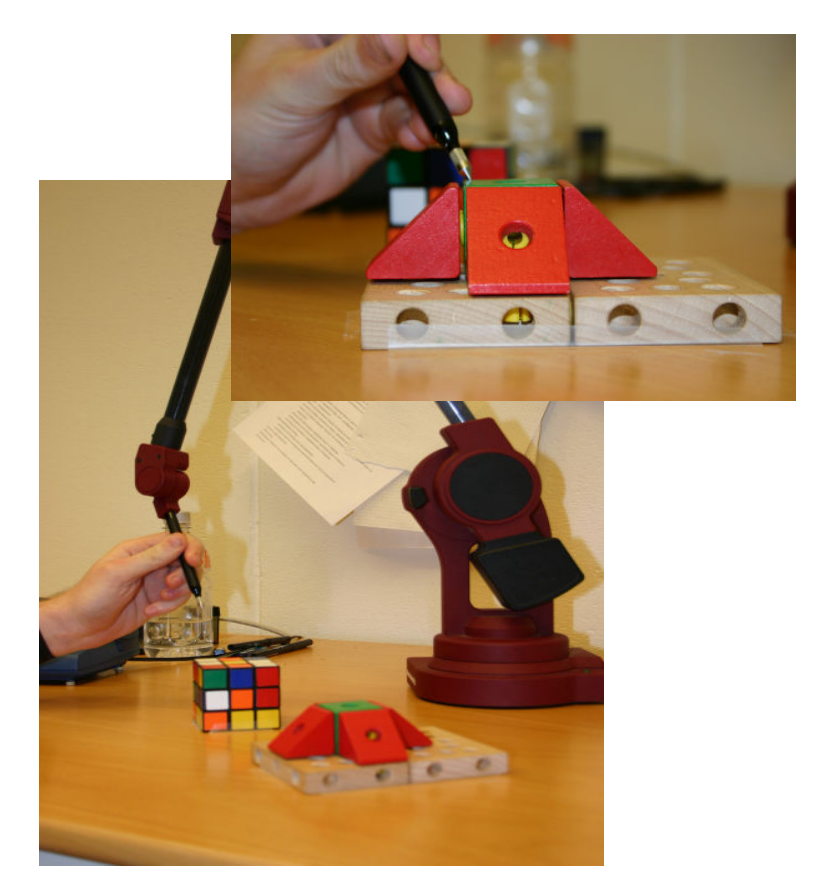}
\caption{This figure is taken from \cite{olsson2008solving} to illustrate the
experimental setup.} 
\label{fig:SpaceStationExpSetup}%
\end{figure}

\newpage
\subsection{Potential Convexity of $f(s)$ and Bisection Method}
Theoretically, a dense search on the scaling $s$ should be performed on a range of
interest. However, we have empirically observed that \eqref{eq:minS} is convex for all of
our data sets.
Figure~\ref{fig:convexIntrinsicAll} shows $f(s)$ is convex for all the sets of the
calibration parameters for our \lidar intrinsic calibration datasets. Similarly, in
Fig.~\ref{fig:Convex3DResgistration} shows the convex property of $f(s)$ for the
simulation and experimental data in \cite{olsson2008solving, briales2017convex}. We, therefore, utilize the bisection method
to determine the scaling $s$. It is emphasized that the convexity of
$\eqref{eq:minS}$ is unknown at this time.

Let $\pre[k][L]s$ and $\pre[k][U]s$ be the lower and upper bounds
of the scaling parameters at the $k$-th iteration. Let $\pre[k+1]s =
(\pre[k][U]s + \pre[k][L]s)/2$ be the update for $(k+1)$-th iteration. The
updates of the scaling parameters are defined as:
\begin{equation}
\begin{aligned}
    \label{eq:UpdateScaling}
    \pre[k+1][L]s &=
\begin{cases}
    \pre[k+1]s, \text{~if~}\nabla f(\pre[k+1][]s) < -\epsilon\\
\pre[k][L]s, \text{~~~~~~otherwise}
\end{cases}\\
    \pre[k+1][U]s &=
\begin{cases}
    \pre[k+1]s, \text{~if~}\nabla f(\pre[k+1][]s) > \epsilon \\
\pre[k][U]s, \text{~~~~~~otherwise}
\end{cases},
\end{aligned}
\end{equation}
where the symmetric difference is used to approximate $\nabla f(\pre[k+1][]s)$:
\begin{equation}
    \label{eq:gradApprox}
    \nabla f(\pre[k+1]s)\approx \frac{f(\pre[k+1]s+h) - f(\pre[k+1]s-h)}{2h},
\end{equation}
where $h>0$ is a small value (taken as $10^{-3}$).

\begin{remark}
    To obtain more accurate results, $\R(\pre[k+1]s + h),~\v(\pre[k+1]s + h),~\R(\pre[k+1]s - h)$ and $\v(\pre[k+1]s - h)$ should be computed separately.
\end{remark}





\begin{figure}[b]%
\centering
\subfloat[]{%
\label{fig:convexIntrinsicAll}%
\includegraphics[width=0.95\columnwidth, trim={0cm 0cm 0cm 0cm},clip]{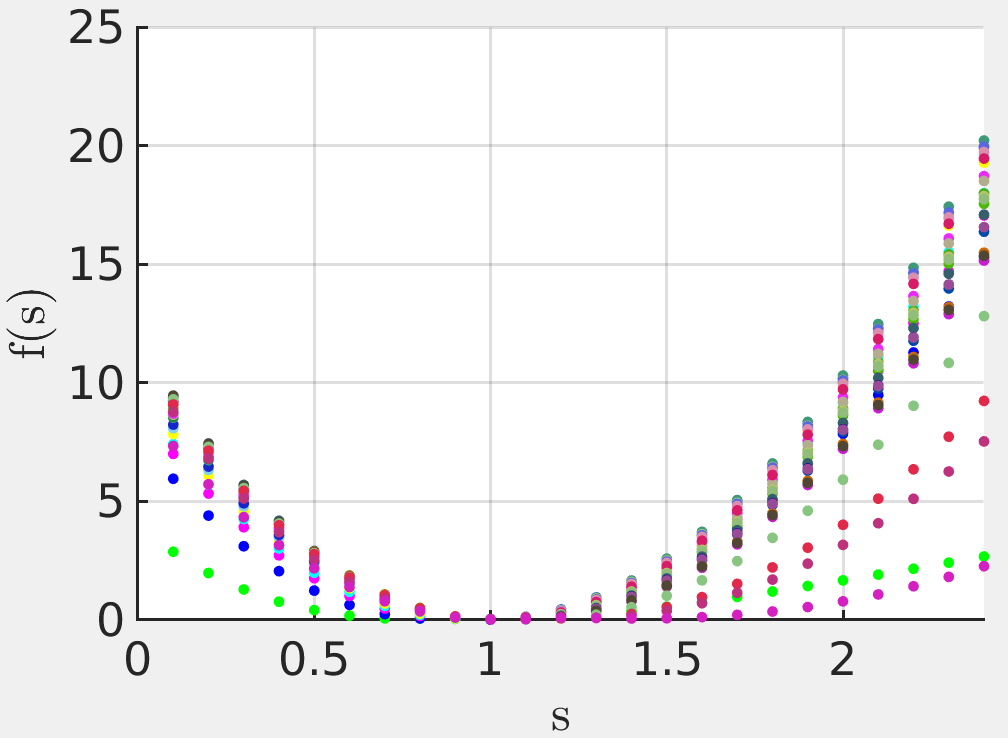}}\\
\subfloat[]{%
\label{fig:Convex3DResgistration}%
\includegraphics[width=0.95\columnwidth, trim={0cm 0cm 0cm 0cm},clip]{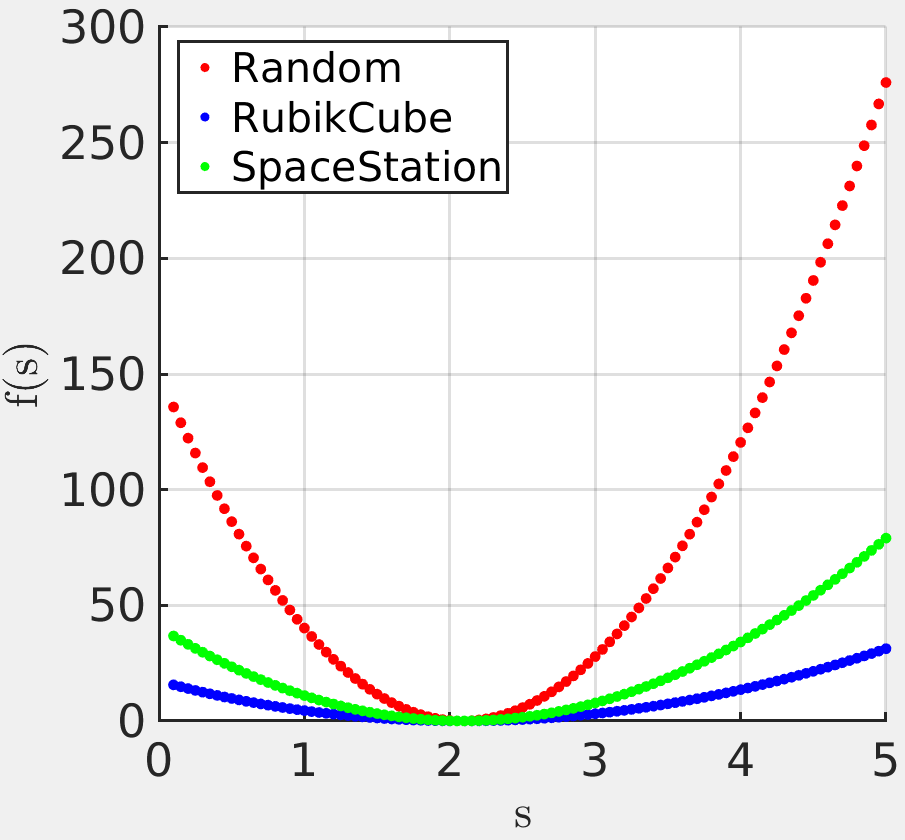}}
\caption[]{
    This figure shows $f$ vs $s$ for the calibration parameters. The top figure shows
    the calibration parameters for all the rings, for $f$ defined in \eqref{eq:minS}. 
    The bottom figure show $f$ for the 3D registration problems (point-to-point,
    point-to-line, point-to-plane). We suspect that the convex shape seen in the plot
is true in general. On going work is seeking a proof. 
}
\label{fig:ConvexALL}
\end{figure}

\end{appendices}

\end{document}